\renewcommand{\and}{\mbox{ and }}
\newcommand{\inc}{\sim}
\newtheorem{example}{Example}
\newtheorem{theorem}{Theorem}
\newtheorem{corollary}{Corollary}
\newtheorem{remark}{Remark}
\newcommand{\AlternativeSet}{\mathcal{A}}
\newcommand{\TierFunction}{r}
\newcommand{\lex}{\mathrm{lex}}
\newcommand{\opt}{\mathrm{opt}}
\newcounter{constraint}
\newcommand{\constraintlabel}[1]{\refstepcounter{constraint}\label{#1}(\theconstraint)}
\newtheorem{definition}{Definition}
\newlength{\bibitemsep}\setlength{\bibitemsep}{.5\baselineskip plus .05\baselineskip minus .05\baselineskip}
\newlength{\bibparskip}\setlength{\bibparskip}{0pt}
\let\oldthebibliography\thebibliography
\renewcommand\thebibliography[1]{%
  \oldthebibliography{#1}%
  \setlength{\parskip}{\bibitemsep}%
  \setlength{\itemsep}{\bibparskip}%
}
\journal{arXiv}%
\def\expandafter\normalsize\expandafter{%
    \normalsize%
    \setlength\abovedisplayskip{2pt}%
    \setlength\belowdisplayskip{2pt}%
    \setlength\abovedisplayshortskip{-8pt}%
    \setlength\belowdisplayshortskip{2pt}%
}
\begin{document}

\begin{frontmatter}

\title{Robust Ordinal Regression for Subsets Comparisons with Interactions\tnoteref{t1}}

\tnotetext[t1]{This paper is a revised and extended version of a workshop paper at MPREF 2022, and an extended abstract at AAMAS 2023:\\ H. Gilbert, M. Ouaguenouni, M. \"Ozt\"urk, O. Spanjaard, \emph{Cautious Learning of Multiattribute Preferences}, 13th Workshop MPREF, Jul 2022, Vienna, Austria.\\ H. Gilbert, M. Ouaguenouni, M. \"Ozt\"urk, O. Spanjaard,
\emph{Robust Ordinal Regression for Collaborative Preference Learning with Opinion Synergies}, AAMAS 2023, pp. 2439-2441.}

\affiliation[inst1]{organization={Sorbonne Université, CNRS, LIP6},%
            city={Paris},
            postcode={F-75005}, 
            country={France}}
            
\affiliation[inst2]{organization={Université Paris Dauphine, PSL Research University, CNRS, LAMSADE},%
            city={Paris},
            postcode={F-75016}, 
            country={France}}

\author[inst2]{Hugo Gilbert}
\author[inst1]{Mohamed Ouaguenouni\corref{cor1}}
\author[inst2]{Meltem \"Ozt\"urk\corref{cor2}}
\author[inst1]{Olivier Spanjaard}
\cortext[cor1]{Corresponding author}
\cortext[cor2]{A significant part of the work presented here has been carried out while Meltem \"Ozt\"urk was on delegation at LIP6.}

\begin{abstract}
This paper is dedicated to a robust ordinal method for learning the preferences of a decision maker between subsets. 
The decision model, derived from Fishburn and LaValle \cite{Fishburn1996BinaryIA} and whose parameters we learn, is general enough to be compatible with any strict weak order on subsets, thanks to the consideration of possible interactions between elements.
Moreover, we accept not to predict some preferences if the available preference data are not compatible with a reliable prediction. 
A predicted preference is considered reliable if all the simplest models (Occam’s razor) explaining the preference data agree on it. 
Following the robust ordinal regression methodology, our predictions are based on %
an uncertainty set encompassing the possible values of the model parameters.
We define a robust ordinal dominance relation between subsets and we design a procedure to determine whether this dominance relation holds.
Numerical tests are provided on synthetic and real-world data to evaluate the richness and reliability of the preference predictions made.%
\end{abstract}

\begin{keyword}
robust ordinal regression \sep preference elicitation \sep positive and negative interactions \sep subsets comparisons
\end{keyword}

\end{frontmatter}

\section{Introduction}\label{sec:intro}

Preference elicitation (or preference learning) is an important step in setting up a recommender system for decision making. 
In this preference elicitation setting, our focus is on determining the parameters of a decision model that accurately captures the pairwise preferences of a Decision Maker (DM) over subsets, by comparing subsets of elements.
The preferences are depicted using a highly adaptable model whose versatility stems from its ability to incorporate positive or negative synergies between elements \cite{grabisch2008review}.
Moreover, we provide an ordinally robust approach, in the sense that the preferences we infer do not rely on arbitrarily specified parameter values, but on the set of all parameter values that are compatible with the observed preferences. 
Importantly, another distinctive feature of our approach is its ability to learn the parameter set itself (not only the \emph{values} of parameters).

The preference model we consider can be used in different contexts, depending on the nature of the subsets we are comparing. 
The subsets are represented by binary vectors, showing the presence or absence of an element in the subset. The elements of a subset can be for example:
\begin{itemize}[noitemsep,topsep=2pt]
\item individuals (in the comparison of coalitions, teams, etc.), 
\item binary attributes (in the comparison of multiattribute alternatives), 
\item objects (in the comparison of subsets in a subset choice problem), etc.
\end{itemize} 

For illustration, a toy example of such an elicitation context could be a coffee shop trying to determine its customers' favorite frozen yogurt flavor combination by offering them to test a small number of flavor combinations rather than having them taste each combination. %

\paragraph{Objective of the paper} Our objective is to design a preference elicitation procedure that complies with the two following principles.

First, the sophistication of the learned preference model should be able to fit any level of complexity of the stated preferences. For this purpose,  we use a utility function $f$ general enough to represent any order $\succ$ of preference, i.e., for any strict weak ordering $\succ$ on a set $\AlternativeSet$ of alternatives (i.e., subsets) there exists $f$ such that, for any pair $\{A,B\}\!\subseteq\!\AlternativeSet$, $f(A)\!>\!f(B)$ iff $A\!\succ\!B$. Note that we also aim to make the model as simple as possible, in the sense that the parameter set remains as concise as possible (\emph{sparse} model).%
    
    Second, the predicted pairwise preferences should not depend on the partly arbitrary choice of precise numerical values for the parameters of the model but solely on the stated preferences. Hence, we design an \emph{ordinally robust} elicitation procedure that maintains an isomorphism between the collected preferential data and the learned model (in the same spirit as ordinal measurement for problem solving \cite{bartee1971problem}) by using a polyhedron of possible values for the parameters, reflecting the uncertainty about them.
As a consequence, when predicting an unknown pairwise preference between two alternatives $A$ and $B$, apart from the predictions ``$A$ is preferred to $B$'' and ``$B$ is preferred to $A$'', it is possible that the model does not make a prediction due to a lack of sufficiently rich preferential data (the absence of prediction is preferred to a wrong prediction, although a compromise  must obviously be made between the reliability of the prediction and the predictive power of the learned model).

\paragraph{Elicitation setting} 
The input of our elicitation procedure is a learning set consisting of pairwise comparisons of various alternatives. More precisely, we consider an offline elicitation setting (passive learning) where we assume that a dataset of comparison examples is available, from which the parameters of the preference model are (partially) specified. This is a separate framework from the online elicitation setting (active learning) where we would incrementally select pairwise preference queries to enrich the learning set. The output of the elicitation procedure consists of pairwise comparisons that were not present in the learning set, which we call (preference) \emph{predictions} hereafter. Note that, in some cases, the model may choose not to provide a prediction. The elicitation procedure thus results in a strict partial order on the alternatives.

\paragraph{Organization of the paper} After an overview of the related work (Section~\ref{sec:related}), we present the $\theta$-additive utility model (Section~\ref{sec:sparse}), as well as the robust ordinal dominance relation inferred from it, based on the knowledge of a collection of preference examples. We then show how to determine whether a subset dominates another subset given the known pairwise preferences of the DM (Section~\ref{sec:PrefPrediction}), which enables to make preference predictions. The paper ends with numerical tests on synthetic and real-world preference data, and comparison with other preference learning methods (Section~\ref{sec:tests}).

\section{Related work}\label{sec:related}

Preference elicitation (see e.g. Dias et al.~\cite{Diasetal2018}) and preference learning (see e.g. Fürnkranz and Hüllermeier~\cite{furnkranz2003pairwise}, Corrente et al.~\cite{ corrente2013robust}) have been studied for a long time in operations research and artificial intelligence. This is a prerequisite in many applications across a wide range of fields, such as recommender systems, banking, financial management, chemistry, energy resources, health, investments, and industrial location \cite{Andreopoulou2017}.  
Several issues can be tackled in preference elicitation, among which:
\begin{enumerate}[noitemsep,topsep=2pt]
\item to handle a set of alternatives of combinatorial nature: an incremental preference elicitation is then often adopted, where comparison examples are interactively generated with the DM, in order to determine a necessary ``optimal'' alternative \cite[e.g.,][]{BenabbouLLP21, BOUTILIER2006686, wangincr};
\item to cope with preferences that cannot be represented by an additive utility function: for instance, the elicitation of generalized utility functions has been considered in the literature \cite{braziunas2007}, but also the elicitation of several other involved decision models \cite{benabbounonadd,Schmeidler1986IntegralRW};
\item to deal with ``incorrect'' preference examples: Bayesian approaches have been considered in this matter \cite{bourdache2019, guo2010multiattribute}, but also possibilistic approaches \cite{adam2021possibilistic}.
\end{enumerate}

We focus here on the second challenge, by studying the elicitation of 
a set function taking into account positive and negative interactions between elements. Furthermore, preference elicitation problems differ in their purpose: some aim to produce a recommendation, others a set of recommendations, and still others pairwise comparisons. We will, in our case, produce a set of pairwise comparisons.

The Choquet integral is the most studied decision model for taking into account positive and negative interactions between criteria in multicriteria decision making \cite{grabisch1996application}. It turns out that a Choquet integral defined on binary vectors representing subsets can be viewed as a set function. 
Note that a Choquet integral is parameterized by a capacity $v$ on the criteria set $N$, i.e., a set function on $N$ that is \emph{monotone} ($A\!\subseteq\!B \Rightarrow v(A) \!\leq\!v(B)$) and \emph{normalized} ($v(N)\!=\!1$). As will become clear in the remainder of the paper, we do not impose such constraints in the model we consider.
There are some recent works dealing with the elicitation of the parameters of a Choquet-related aggregation function: Bresson et al.~\cite{Bresson2020Learning2H} use a perceptron approach to learn the parameters of a 2-additive hierarchical Choquet integral, while Herin et al.~\cite{herin2023learning} propose an algorithm to learn sparse M\"obius representations from preference examples, without a prior $k$-additivity assumption.
For a broad literature review about learning the parameters of a Choquet integral, the reader may refer to the article by Grabisch, Kojadinovic, and Meyer~\cite{grabisch2008review}.
Let us also mention the work by Marichal and Roubens~\cite{marichal2000determination}, which use a polyhedron to characterize the set of parameters that are compatible with a training set of examples. The idea of defining a polyhedron of uncertainty on the parameters of a utility function goes back at least to the work of Charnetski and Soland~\cite{charnetski1978multiple}. %
Their model state that $A\!\succ\!B$ if the proportion of parameters that give a better value for $A$ than for $B$ among those that are compatible with the stated preferences is greater than the proportion of parameters that give a better value for $B$ than for $A$.
This principle was also adapted to the case of a Choquet integral by Angilella, Corrente and Greco~\cite{Angilella2015}. %
In the sequel, 
we will use a similar polyhedron. %

More precisely, we elicit a partial specification of a set function, namely the components of the parameter set and the set of parameter values, which yields an ordinal dominance relation between subsets. %
As already mentioned, we do not assume interactions with the DM but only the knowledge of a ``static'' training set of examples of pairwise preferences in order to predict pairwise comparisons between alternatives.

Predicting a comparison between alternatives can be framed as a binary classification problem by considering, as a training set, a set of triples $(A,B,c)$, where $A$ and $B$ are two alternatives and $c\!=\!1$ if $A\!\succ\!B$, and $c\!=\!0$ otherwise. In this setting, many approaches have been proposed, going from perceptrons \cite{abs-1711-07875} to Gaussian processes \cite{chu05preference} or Support Vector Machines (SVM) \cite{domshlak2005unstructuring}.

An important feature of our elicitation procedure is that it may lead to not making predictions for some pairwise comparisons if the available preferential information is not conclusive enough. Other classification models also have  such a possibility to not predict a class for some examples, either because of an ambiguity in the class to predict (ambiguity rejection) or because the example is too far from the examples that are in the learning set (novelty rejection). This type of approaches are generally used in safety-sensitive domains, e.g. to predict a disease in medical applications \cite{Kompa2021}. For a complete review of learning with reject option, we refer the reader to the survey made by Hendrickx et al. \cite{rejectsurvey}.

The two closest works to ours are those by Domshlak and Joachims~\cite{domshlak2005unstructuring} and by Bigot et al.~\cite{bigot2012using}. Similarly to our approach, Domshlak and Joachims consider a function that could represent any weak order on the alternatives. More precisely, they consider a multiattribute utility function that is a sum of $4^n$ subutilities over subsets of attribute values, where $n$ is the number of attributes. The subutility values are then learned using an efficient SVM approach based on the \emph{kernel trick} \cite[see e.g.,][]{scholkopf2018learning}.
Bigot et al. study the use of generalised additively independent decompositions of utility functions \cite{fishburn1970utility,gonzales:hal-01492604}. They give a PAC-learner that is polynomial time if a constant bound is known on the degree of the function, where the \emph{degree} is the size of the greatest subset of attributes in the decomposition. Yet, both works do not fit the robust ordinal learning framework we consider in this article.

\section{From the $\theta$-additive model to robust ordinal dominance}\label{sec:sparse}

Given a set $\mathcal{F}\!=\!\{a_1, a_2, \ldots, a_n\}$ of elements, we aim to reason on the preferences of the DM on a set $\AlternativeSet$ of subsets $A\!\subseteq\!\mathcal{F}$, representing alternatives. The characteristic vector $\overrightarrow{A}$ of a subset $A$ is the $n$-dimensional binary vector whose $i^{\mbox{\tiny th}}$ component is 1 if $a_i\!\in\!A$, and 0 otherwise. 
For instance, the characteristic vector of $A\!=\!\{a_1,a_2,a_4\}$ is $\overrightarrow{A}\!=\!(1,1,0,1)$ if $\mathcal{F}\!=\!\{a_1, a_2, a_3, a_4\}$.
In the following, we may use one or the other notation for describing a subset. %
Here are some examples of alternatives represented by subsets:

\begin{itemize}[noitemsep,topsep=2pt]

\item If $\mathcal{F}$ is a set of reference users expressing opinions on cultural products (e.g., movies), a cultural product may be represented by the subset $A$ of reference users in $\mathcal{F}$ that have a positive opinion on it, i.e., $a_i\!\in\!A$ if reference user $a_i$ has a positive opinion on it, otherwise $a_i\!\not\in\!A$. 
\item If $\mathcal{F}$ is the set of players in a squad, a team lineup may be represented by the subset $A$ of players that compound it. %
\item If $\mathcal{F}$ is a set of binary features of technological products (e.g., smartphones), a technological product may be represented by a subset $A$ of features, i.e., $a_i\!\in\!A$ if the product has feature $a_i$, otherwise $a_i\!\not\in\!A$.
\end{itemize}
 
We assume for simplicity that there are no two distinct alternatives corresponding to the same subset $A\!\subseteq\!\mathcal{F}$, which implies in particular that $2^{|\mathcal{F}|}\!\ge\!|\mathcal{A}|$. %
We infer strict pairwise preferences from strict preferences given by a DM on some subset of alternatives in $\AlternativeSet$, and we use this training set of pairwise preferences on alternatives (each viewed as a subset) to elicit the parameters of a utility function $f$ defined on $\mathcal{A}$. The role of the utility function $f$ is to represent the (unknown) strict weak order on $\mathcal{A}$ corresponding to the DM's preferences, with $A\!\succ\!B$ iff $f(A)\!>\!f(B)$ and $A\!\sim\!B$ iff $f(A)\!=\!f(B)$.%

We do not perform a full elicitation of the parameters of $f$, but we consider an uncertainty set of parameters values consistent with the known preferences of the DM, as in robust ordinal regression. If $f(A)\!>\!f(B)$ for all parameters values in this uncertainty set, then $A$ is predicted to be strictly preferred to $B$. Actually, we do not only learn the parameters values, but also the components of the parameter set themselves, as we explain below.

\subsection{The $\theta$-additive model}

Before coming to the proposed $\theta$-additive model, we first recall the standard additive utility model, and its extension, the $k$-additive utility model.

\paragraph{The additive and $k$-additive utility models} As the DM's preferences over $\AlternativeSet$ are modeled as a strict weak order, there exists a real-valued function $f$ such that $\forall A,B\!\in\!\AlternativeSet, f(A)\!>\!f(B)\!\Leftrightarrow\!A\!\succ\!B$. 
Many models assume that $f$ can be represented in a compact way using some sort of additivity property. 
The simplest and most used one is the additive model \cite{fishburn1970utility}.
This model makes the strong assumption that we can find a parameter value $v(a)\!\in\!\mathbb{R}$ for each element $a\!\in\!\mathcal{F}$ such that for all $A\!\in\!\mathcal{A}$, the utility of $A$ is $f(A)\!=\!\sum_{a \in A } v(a)$. %
This assumption is strong because it implies that there is no interaction between the elements.  
A weaker assumption is that of \emph{$k$-additivity} where we suppose the existence of a parameter $v(S)\!\in\!\mathbb{R}$ for each $S\!\in\![\mathcal{F}]^k$, where $[\mathcal{F}]^k\!=\!\{S\!\subseteq\mathcal{F}\!:\!1\!\le\!|S|\!\le\!k\}$. 
Hence, in the $k$-additive model, for all $A\!\in\!\AlternativeSet$, $f(A)\!=\!\sum_{S \in [\mathcal{F}]^k} I_A(S)v_S$, where $I_A(S)\!=\!1$ if $S \!\subseteq\!A$ and 0 otherwise, and $v_S$ is an abbreviation for $v(S)$. Obviously, the $1$-additive model amounts to the additive model. Taking $k$ strictly greater than 1 makes it possible to account for (positive or negative) synergies between subsets of $k$ or less elements.
For example, the $2$-additive model makes it possible to account for binary synergies. 
The utility of the alternative $A\!=\!(1,1,0,1)$ with the 2-additive model is $f(A)\!=\!v(\{a_1\})+v(\{a_2\})+v(\{a_4\})+v(\{a_1,a_2\})+v(\{a_1,a_4\})+v(\{a_2,a_4\})$. %
If there is a positive synergy between $a_1$ and $a_2$ then $f(\{a_1,a_2\})> v(\{a_1\})+v(\{a_2\})$ holds because $f(\{a_1,a_2\})\!=\!v(\{a_1\})+v(\{a_2\})+v(\{a_1,a_2\})$. Note incidentally that $f(\{a_1,a_2\})\!\neq\!v(\{a_1,a_2\})$. 
The $n$-additive model is general enough to represent \emph{any} strict weak order on $\AlternativeSet$ because it can represent any real-valued set function $f\!:\!2^{\mathcal{F}}\!\rightarrow\!\mathbb{R}$ \cite{grabisch2000equivalent}, provided that $f(\emptyset)\!=\!0$.  
However, it requires to specify $2^n\!-\!1$ parameters. 
We therefore restrict our attention to additive models requiring fewer parameters.

\paragraph{The $\theta$-additive model} 
Given a set $\theta\!\subseteq\! 2^{\mathcal{F}}$, and a set function
$v\!:\!\theta\!\rightarrow\!\mathbb{R}$, we assume that $f$ is of the form $f(A)\!=\!\sum_{S \in \theta} I_A(S) v_S$, where $v_S$ stands again for $v(S)$. 
We call this the \emph{$\theta$-additive model}.
For this model, we may also use the notation $f_{\theta,v}(A)$ instead of $f(A)$.  
The 1-additive (resp. $k$-additive) model is the special case in which $\theta\!=\![\mathcal{F}]^1$ (resp. $\theta\!=\![\mathcal{F}]^k$).%

\begin{example}\label{ex : intro theta model}
Let $\mathcal{F}\!=\! \{a_1,a_2,a_3,a_4\}$ be a set of 4 elements, $\AlternativeSet\!=\! \{0,1\}^{4}$ and the DM's preferences be the strict weak order $\succsim$ given by :
\begin{align*}
     & & \{a_2, a_3, a_4\} & \succ\!\!\!  & \{a_1, a_3, a_4\} & \succ\!\!\! & \{a_1, a_2, a_4\}& \succ\!\!\! & \{a_3, a_4\} \\ & \succ\!\!\! & \{a_2, a_4\} & \succ\!\!\! &\{a_2, a_3\}& \succ\!\!\! & \{a_1, a_4\} & \succ\!\!\! &  \{a_1, a_3\} \\ & \succ\!\!\! & \{a_1, a_2\} & \succ\!\!\! & \{a_4\} & \succ\!\!\! & \{a_3\} & \succ\!\!\! & \{a_2\} \\ & \succ\!\!\! & \{a_1\} & \succ\!\!\! & A\!=\!\{a_1, a_2, a_3, a_4\} & \sim\!\!\! & \emptyset & \succ\!\!\! & B\!=\!\{a_1, a_2, a_3\}.
\end{align*}

These preferences can be explained by a clear negative synergy when $a_1$, $a_2$, and $a_3$ are chosen together (in $A$ and $B$). 
Interestingly, instead of using a complete 3-additive model, which would require the definition of 14 parameters, this strict weak order can be obtained by using a $\theta$-additive model with $\theta\!=\!\{\{a_1\},\{a_2\},\{a_3\},\{a_4\},\{a_1,a_2,a_3\}\}$ and $ v_{\{a_1\}}\!=\! 1$, $v_{\{a_2\}}\!=\!2$, $v_{\{a_3\}}\!=\! 3$, $v_{\{a_4\}}\!=\!4$, $v_{\{a_1,a_2,a_3\}}\!=\!-10$. This allows us to benefit from the expressiveness offered by 3-additivity while restricting the number of parameters. %

\end{example}

\subsection{The $\theta$-ordinal dominance relation}
In our elicitation setting, we assume that we have only access to a partial set $R$ of strict pairwise preferences provided by the DM.
This set may contain only a few comparisons. Our aim is to use these observed preferences to infer other strict pairwise preferences on the set of alternatives.
We formalize $R$ as a set of pairs $(A, B)\!\in\!\AlternativeSet^2$ such that $(A,B)\!\in\!R \Leftrightarrow A\succ\!B$.

Moreover, given $\theta$, the set of value functions on $\theta$ that are compatible with the preferences observed in $R$ is denoted by $V_{\theta}^R$:   
$$
V_{\theta}^R = \{v:\theta \rightarrow \mathbb{R}\,|\, \forall (A,B)\!\in\!R,\, f_{\theta,v}(A) > f_{\theta,v}(B)\}.
$$
Note that, for a given $\theta$, this set $V_{\theta}^R$ can be either empty or composed of an infinity of possible value functions on $\theta$. 
Notably, if this set is empty then the preferences of the user cannot be represented by a $\theta$-additive function. We denote by $\Theta^{R}$ the set $\{\theta \,|\, V_{\theta}^R\!\neq\!\emptyset \}$, i.e., the $\theta$'s such that the preferences in $R$ are consistent with a $\theta$-additive function.

Unfortunately, given $\theta\!\in\! \Theta^{R}$ such that $V_{\theta}^R\!\neq\!\emptyset$,
a pair $\{v,v'\}$ of value functions in $V_{\theta}^R$ may lead to infer opposite preferences, as illustrated below.
\begin{example}\label{ex : intro theta model 2}
Let $\mathcal{F} = \{a_1,a_2,a_3,a_4\}$. Let us assume that, contrary to Example~\ref{ex : intro theta model}, we now only observe preferences on the singletons $\{a_1\},\{a_2\},\{a_3\},\{a_4\}$: 
\begin{align*}
    & \{a_4\} \succ \{a_3\} \succ \{a_2\} \succ \{a_1\}, \mbox{or equivalently:}\\
     & R=\{(\{a_4\},\{a_3\}),\,(\{a_4\},\{a_2\}),\,(\{a_4\},\{a_1\}),\\
     & ~~~~~~~~~~~~~~~~~~~~~~~~(\{a_3\},\{a_2\}),\,(\{a_3\},\{a_1\}),\,(\{a_2\},\{a_1\})\}. 
\end{align*}
The two additive functions $v$ and $v'$ defined by: 
\begin{align*}
& v(\{a_1\})\!=\!1,\,v(\{a_2\})\!=\!2,\,v(\{a_3\})\!=\!3, \,v(\{a_4\})\!=\!5\\
\mbox{and } & v'(\{a_1\})\!=\!1,\,v'(\{a_2\})\!=\!3,\,v'(\{a_3\})\!=\!4,\,v'(\{a_4\})\!=\!5 
\end{align*}
are both in $V_{\theta}^R$, but we infer $\{a_1, a_4\}\succ \{a_2, a_3\}$ from $v$ while we infer $\{a_2, a_3\}\succ \{a_1, a_4\}$ from $v'$.
\end{example}

This example shows that, given $R$, choosing a specific function $v\!\in\!V_{\theta}^R$ can lead to infer  preferences that are only related to this arbitrary choice~\cite{bartee1971problem}. 
Our aim is to infer preferences for pairs outside $R$ in a reliable way by eliminating such arbitrary choices. In this purpose, we turn to a robust ordinal regression approach based on the observed preferences in $R$. %

Fishburn and Lavalle \cite{Fishburn1996BinaryIA} showed how one can obtain an \emph{ordinal dominance relation} from a partially specified 2-additive numerical model. We now explain how their idea can be extended to a $\theta$-additive model.

\begin{definition}
Let $\mathcal{F}$ be a set of elements, $\AlternativeSet \subseteq 2^{\mathcal{F}}$ a set of subsets  and $R$ a set of pairs $(A, B)\!\in\!\AlternativeSet^2$ where $(A,B) \in R \Leftrightarrow A \succ B$. Given $ \theta\!\in\!\Theta^{R}$, the $\theta$-ordinal dominance relation, denoted by $\succ_{\theta}^{R}$, is defined for $A,B \in \AlternativeSet$ by: 
$$
A \succ_{\theta}^R B \Leftrightarrow \forall v \in V_{\theta}^R,\, f_{\theta, v}(A) > f_{\theta, v}(B).
$$
\end{definition}

The $\theta$-ordinal dominance relation is independent from the choice of a specific $v\!\in\! V_{\theta}^R$. 
Naturally, $(A,B)\!\in\!R\!\Rightarrow\!A\!\succ_{\theta}^R\!B$.
Nevertheless, note that the binary relation $\succ_{\theta}^R$ is obviously partial, and we define the incomparability relation $\inc_{\theta}^R$ as: 
$$
A \inc_\theta^R B \Leftrightarrow 
\exists v , v' \in V_{\theta}^R,\,
f_{\theta, v}(A) \ge f_{\theta, v}(B) \mbox{ and } f_{\theta, v'}(B) \ge f_{\theta, v'}(A).
$$
If $A\!\succ_\Theta^R\!B$ then we can predict, based on $R$ and for a $\theta$-additive model, that $A$ is strictly preferred to $B$. If $A\!\inc_{\theta}^R\!B$ then no prediction is made %

We conclude this section by mentioning some properties of $\succ_\theta^R$: 
\begin{itemize}[noitemsep,topsep=2pt]
\item Unlike $\succ$, the relation $\succ_\theta^R$ is not a strict  weak order: it is asymmetric but it may not be complete nor negatively-transitive. The absence of preference prediction may occur in two situations that are not equivalent: either $A$ and $B$ belong to the same incomparability class of the (unknown) strict weak order $\succ$ on $\mathcal{A}$, i.e., $A\!\sim\!B$, or there is not enough preferential information in $R$ to conclude that $A\succ\!B$ or $B\succ\!A$.
\item Since the ordinal dominance relation depends on the preference set $R$ and on the model $\theta$, the relation $\succ_\theta^R$ evolves when $\theta$ or $R$ are restricted or extended. In particular, if $\theta'\! \subseteq\!\theta$ then any prediction that is yielded using ordinal dominance with the model $\theta$ is also yielded using ordinal dominance with the model $\theta'$; thus, if $V^R_\theta\!\neq\!\emptyset$ and $V^R_{\theta'}\!\neq\!\emptyset$, then $\theta'$ appears as more appealing from a preference learning standpoint since it allows more predictions to be made. Furthermore, one could prefer $\theta'$ over $\theta$ because of the philosophical principle of parsimony \cite[e.g.][]{blumer1987occam}.%
\end{itemize}

A more formal and detailed description of the properties of $\succ_\theta^R$ can be found in the supplementary material (Appendix A).

\subsection{The robust ordinal dominance relation}
Note that the ordinal dominance relation is dependent on the choice of a specific set $\theta\!\in\!\Theta^R$.
However, as shown in the following example, there may be several $\theta$'s in $\Theta^{R}$.

\begin{example}
Assume that $R$ consists of all pairwise preferences resulting from $\succ$ in  Example~\ref{ex : intro theta model}. Setting $\theta\!=\!\{\{a_1\},\{a_2\},$ $\{a_3\},\{a_4\}\}$ yields then $V_{\theta}^R\!=\!\emptyset$. In contrast, setting $\theta_1 = \{\{a_1\},\{a_2\},\{a_3\},\{a_4\},$ $\{a_1,a_2,a_3\}\}$ yields $V^{R}_{\theta_1}\neq \emptyset$. Actually, there are many other sets $\theta$ compatible with the preferences in $R$: it can be shown\footnote{It has been computer tested by brute force enumeration.} that $\Theta^{R}\!=\!\{\theta : \theta_1 \subseteq \theta \}$ for this example.
\end{example}

The question that naturally arises is whether we could find two different models $\theta_1, \theta_2 \in \Theta^R$ that are both compatible with the observed preferences in $R$ and such that $A\succ_{\theta_1}^R B$ and $B \succ_{\theta_2}^R A$ for a pair of alternatives $(A,B)\!\in\!\AlternativeSet^2$. Unfortunately, this situation may indeed happen:

\begin{example}
Let $R\!=\!\{(\{a_1\},\{a_2\})\}$, $\theta_1\!=\!\{\{a_1\}\}$ and $\theta_2\!=\!\{\{a_2\}\}$.  
Note that both $\theta_1$ and $\theta_2$ belong to $\Theta^R$. 
If we consider $\theta_1\!=\!\{\{a_1\}\}$, the set $V^R_{\theta_1}$ is compounded of value functions $v$ defined on $\theta_1$ such that $v(\{a_1\})\!>\!0$. Hence, for all $v\!\in\!V^R_{\theta_1}$ we have $f_{\theta_1, v}(\{a_1,a_2\})\!=\!v(\{a_1\})\!>\!0\!=\!f_{\theta_1, v}(\emptyset)$ and thus $\{a_1,a_2\}\!\succ_{\theta_1}^R\! \emptyset$. 
Conversely, if we consider $\theta_2\!=\!\{a_2\}$, the set $V^R_{\theta_2}$ is compounded of value functions $v$ defined on $\theta_2$ such that $v(\{a_2\})\!<\!0$.
This yields $f_{\theta_2, v}(\{a_1,a_2\})\!=\!v(\{a_2\})\!<\!0$ for each $v\!\in\!V^R_{\theta_2}$ and thus  $\emptyset\!\succ_{\theta_2}^R\!\{a_1,a_2\}$. 
\end{example}

In what follows, we will define a more robust variant of the ordinal dominance relation. 
This variant will take into account the plurality of models compatible with the observed preferences.  

Note that there always exists a $\theta$ able to represent $R$ (at worst, $\theta\!=\!2^\mathcal{F}$) and that if a $\theta$-additive model is compatible with $R$, then any $\theta'$-additive model with $\theta \subseteq \theta'$ is also compatible with $R$. 
For this reason, the number of sets $\theta$ compatible with the observed preferences may be very large.  

For this reason, we start by restricting the set of models to take into account. 
In this purpose, we need a binary relation $\sqsubseteq$ on $\Theta^R$, such that $\theta \sqsubseteq \theta'$ if $\theta$ is considered simpler than $\theta'$.
Our idea is to only consider sets $\theta$ that are minimal according to such a binary relation, i.e., $\theta$ such that $\not\exists \theta'\!\in\!\Theta^R$ for which $\theta'\!\sqsubseteq\!\theta$. 
This is motivated by the philosophical principle of parsimony that the simpler of two explanations is to be preferred (Occam's razor \cite{blumer1987occam}).
Different possible definitions for $\sqsubseteq$ will be discussed upon in the following subsection. 

We call $\sqsubseteq$-\textit{simplest} $\theta$ of $\Theta^R$ the parameter sets $\theta\!\in\!\Theta^{R}$ which are minimal w.r.t. $\sqsubseteq$, and we denote by $\Theta^R_{\sqsubseteq}$ their set.  
Based on $\Theta^R_{\sqsubseteq}$, we extend the ordinal dominance relation to define the \emph{$\sqsubseteq$-robust ordinal dominance relation}.
\begin{definition}
Let $\mathcal{F}$ be a set of elements, $\AlternativeSet \subseteq 2^F$ a set of subsets  and $R$ a set of pairs $(A, B)\!\in\!\AlternativeSet^2$ where $(A,B)\!\in\!R \Leftrightarrow A\!\succ\!B$. The $\sqsubseteq$-robust ordinal dominance relation, denoted by $\succ^R_{\sqsubseteq}$, is defined, for $A,B \in \AlternativeSet$, as follows: 
\begin{align*} 
   A \succ^R_{\sqsubseteq} B & \iff \forall \theta \in \Theta^{R}_{\sqsubseteq},\, A \succ_{\theta}^R B, \\
                & \iff \forall \theta \in \Theta^{R}_{\sqsubseteq},\, \forall v \in V_{\theta}^R,\, f_{\theta, v}(A) > f_{\theta, v}(B).
\end{align*}
\end{definition}
In other words, $A$ $\sqsubseteq$-robustly ordinally dominates $B$ if $A$ $\theta$-ordinally dominates $B$ according to all $\theta$ in $\Theta^R_{\sqsubseteq}$, i.e., all the  $\sqsubseteq$-\textit{simplest} $\theta$'s of $\Theta^R$.

\subsection{Different definitions for $\sqsubseteq$} \label{part: sqsubset_def}

We say that a relation $\sqsubseteq$ is \emph{based on} a function $\xi$ when $\theta \sqsubseteq \theta'$ if and only if $\xi(\theta)\leq\xi(\theta')$. Several aspects can be taken into account to define $\xi$:

\noindent$\bullet$ A first idea is to favor parameter sets $\theta$ that minimize the complexity of synergies between the attributes. To measure this complexity, we use the \emph{degree} of $\theta$, namely $\mathtt{deg}(\theta)\!=\!\max\{|S|:S\!\in\!\theta\}$ (i.e., the greatest cardinality of a subset of interacting attributes). This leads to the binary relation $\sqsubseteq_{\mathtt{deg}}$ based on $\mathtt{deg}$, i.e., $\theta_1 \sqsubseteq_{\mathtt{deg}} \theta_2 \Leftrightarrow \mathtt{deg}(\theta_1) \leq \mathtt{deg}(\theta_2)$.

\noindent$\bullet$ A second idea is to favor parameter sets $\theta$ having the \emph{sparsest} possible representation \cite{zhang2015survey}, i.e., those which minimize $\mathtt{card}(\theta) = |\theta|$. 
This choice yields the binary relation $\sqsubseteq_{\mathtt{card}}$, which is the relation based on the function $\mathtt{card}$, i.e., $\theta_1 \sqsubseteq_{\mathtt{card}} \theta_2 \Leftrightarrow \mathtt{card}(\theta_1) \leq \mathtt{card}(\theta_2)$.

\noindent$\bullet$ Alternatively, we define a binary relation combining the ideas of $\sqsubseteq_{\mathtt{deg}}$ and $\sqsubseteq_{\mathtt{card}}$ by considering both the number and the size of elements in a parameter set $\theta$. 
In this purpose, we define $\sqsubseteq_{\mathtt{ws}}$, the relation based on the function $\mathtt{ws}(\theta)\!=\!\sum_{S \in \theta} |S|$, i.e., $\theta_1 \sqsubseteq_{\mathtt{ws}} \theta_2 \Leftrightarrow \mathtt{ws}(\theta_1) \leq \mathtt{ws}(\theta_2)$.

\noindent$\bullet$ Lastly, we define the binary relation $\sqsubseteq_{\mathtt{lex}}$, defined by using lexicographically the binary relations $\sqsubseteq_{\mathtt{deg}}$, $\sqsubseteq_{\mathtt{card}}$, and $\sqsubseteq_{\mathtt{ws}}$, in this order. This relation could be seen as based on the function $\mathtt{lex}$ where $\mathtt{lex}(\theta) = n 4^n \mathtt{deg}(\theta) + n 2^n \mathtt{card}(\theta) + \mathtt{ws}(\theta)$. 

\begin{example}
Let $R\!=\! \{(\{a_1, a_2\},\{a_3, a_4\}),\,(\{a_1, a_2\},\{a_1, a_3\})\}$.
It is easy to see that $V_\theta^R \!\neq\!\emptyset$ for $\theta\!=\!\{\{a_1,a_2\}\}$, which corresponds to a model of degree 2. 
However, we may prefer being consistent with a model of degree 1, even if there are more elements in it: $\theta'=\{\{a_1\}, \{a_2\}\}$ or $\theta''=\{\{a_1\}, \{a_3\}\}$ or $\theta'''=\{\{a_2\} \}$. In this example, the minimal parameter set $\theta$ among $\theta',\theta'',\theta'''$ w.r.t. relation $\sqsubseteq_{\mathtt{deg}}$ (resp. $\sqsubseteq_{\mathtt{card}}$, $\sqsubseteq_{\mathtt{ws}}$, $\sqsubseteq_{\mathtt{lex}}$) is $\{\theta', \theta'', \theta'''\}$ (resp. $\{\theta'''\}$ in the three cases). 
\end{example}

\section{Preference prediction by using robust ordinal dominance}%
\label{sec:PrefPrediction}

Given a set $R$ of pairwise preferences %
and a binary relation $\sqsubseteq$ on $\Theta^R$, the preference learning method we propose consists in predicting that a subset $A$ is preferred to $B$ if $A\!\succ^R_{\sqsubseteq}\!B$, i.e., $A$ is preferred to $B$ for all simplest models $\theta\!\in\!\Theta^R$ and value functions $v\!\in\!V^R_\theta$. The purpose of this section is to detail the procedure for determining whether $A\!\succ^R_{\sqsubseteq_\mathtt{lex}}\!B$. It is organized as follows:

\noindent$\bullet$ We show that determining if $A\!\succ^R_\theta\!B$ is polytime in $|R|$ and $|\theta|$, while determining if $A\!\succ^R_{\sqsubseteq}\!B$ amounts to testing whether $\Theta^R_{\sqsubseteq}\cap \Theta^R_{B\succsim A}\!=\!\emptyset$, where $\Theta^R_{B\succsim A}\!=\!\{\theta \in \Theta^R : B\!\succ^R_{\theta}\!A \mbox{ or } B\!\sim^R_{\theta}\!A\}$ (Subsection~\ref{sec:DeterminingThatADominatesB}). %

\noindent$\bullet$  As determining an explicit representation of $\Theta^R_{\sqsubseteq}$ is likely to be cumbersome (as the size of $\Theta^R_{\sqsubseteq}$ can be very large), we turn to an implicit representation based on the values $\mathtt{deg}(\theta)$, $\mathtt{card}(\theta)$, $\mathtt{ws}(\theta)$ for $\theta\!\in\!\Theta^R_{\sqsubseteq}$. We thus study the computational complexity of determining $\mathtt{deg}(\theta)$ (resp. $\mathtt{card}(\theta)$, $\mathtt{ws}(\theta)$, $\mathtt{lex}(\theta)$) for $\theta\!\in\! \Theta^R_{\sqsubseteq}$ and $\sqsubseteq = \sqsubseteq_{\mathtt{deg}}$ (resp. $\sqsubseteq = \sqsubseteq_{\mathtt{card}}$, $\sqsubseteq = \sqsubseteq_{\mathtt{ws}}$, $\sqsubseteq = \sqsubseteq_{\mathtt{lex}}$), showing that the former problem can be solved in polynomial time, while the others are NP-hard (Subsection~\ref{sec:complexity}).

\noindent$\bullet$ The implicit representation of $\Theta^R_{\sqsubseteq_{\mathtt{lex}}}$ is based on the following idea: if we know that $\theta_0\!\in\!\Theta^R_{\sqsubseteq_{\mathtt{lex}}}$, then $\theta\!\in\!\Theta^R_{\sqsubseteq_{\mathtt{lex}}} \Leftrightarrow (\mathtt{deg}(\theta), \mathtt{card}(\theta),\mathtt{ws}(\theta))\!=\!(\mathtt{deg}(\theta_0), \mathtt{card}(\theta_0),\mathtt{ws}(\theta_0))$. It is thus enough to determine a single model $\theta_0\!\in\!\Theta^R_{\sqsubseteq_{\mathtt{lex}}}$ to be able to determine whether a model belongs to $\Theta^R_{\sqsubseteq_{\mathtt{lex}}}$. This is why we propose a Mixed Integer Program (MIP) to compute a model $\theta\!\in\!\Theta^R_{\sqsubseteq_{\mathtt{lex}}}$, derived from a linear program for determining whether a model $\theta$ belongs to $\Theta^R$ (Subsection~\ref{sec:computeThetaMin}).

\noindent$\bullet$ We derive from it another MIP to compute a model in $\Theta^R_{\sqsubseteq_{\mathtt{lex}}}\cap \Theta^R_{B\succsim A}$, concluding $A\!\not\succ^R_{\sqsubseteq_{\mathtt{lex}}}\!B$ if it exists, $A\!\succ^R_{\sqsubseteq_{\mathtt{lex}}}\!B$ otherwise (Subsection~\ref{sec:DeterminingRobustOrdinalDominance}).

\subsection{Determining whether $A\!\succ^R_{\theta}\!B$ and whether $A\!\succ^R_{\sqsubseteq}\!B$}
\label{sec:DeterminingThatADominatesB}

We first show that, unsurprisingly, linear programming provides an operational tool for determining whether $A\!\succ^R_{\theta}\!B$. %
Viewing a value function on $\theta$ as a vector $v\!=\!(v_S)_{S\in\theta}$ where $v_S\!=\!v(S)$, the set $V_{\theta}^R$ corresponds to the polyhedron defined by the following linear constraints in the $|\theta|$-dimensional parameter space (where each parameter $v_S$ corresponds to a dimension)\footnote{The right hand side of the constraint is here set to 1, but it could be set to any strictly positive constant as utilities $v_S$ are always compatible with $R$ to within a positive multiplicative factor.}:
\begin{equation*}
\begin{split}
    \forall (X,Y) \in R, \sum_{S \in \theta} I_X(S) v_S - \sum_{S \in \theta} I_Y(S) v_S \ge 1. 
\end{split}
\end{equation*}

For a given set $R$ of strict pairwise preferences and a model $\theta\!\in\!\Theta^R$, checking whether $A\!\succ^R_\theta\!B$ can be evaluated in polynomial time in $|R|$ and in $|\theta|$ by solving the following linear program $\mathcal{P}_{A \succ^R_\theta B}$, where there is one variable $v_S\!\in\!\mathbb{R}$ for each pair $S\!\in\!\theta$: 
\begin{equation*} \label{eq:APrefThetaB}
    (\mathcal{P}_{A \succ^R_\theta B}) \left\{ 
\begin{array}{lr}
    \min \displaystyle\sum_{S \in \theta} I_A(S) v_S - \displaystyle\sum_{S \in \theta} I_B(S) v_S &\\
    \displaystyle\sum_{S \in \theta} (I_X(S) - I_Y(S)) v_S \ge 1 &\forall (X,Y) \in R\setminus\{(A,B)\},\\
    v_S \in \mathbb{R} &\forall S \in \theta.
\end{array}    
     \right. 
\end{equation*}
We have that $A\!\succ^R_\theta\!B$ if and only if the optimal value of $\mathcal{P}_{A \succ^R_\theta B}$ is strictly positive, as
it implies that $\sum_{S \in \theta} I_A(S) v_S > \sum_{S \in \theta} I_B(S)$ for all $v\!\in\!V^R_\theta$.

In contrast with this positive complexity result for ordinal dominance, determining whether $A\!\succ^R_{\sqsubseteq}\!B$ by direct use of the definition of robust ordinal dominance would require a high computational burden. We overcome this difficulty by reducing this problem to testing whether $\Theta^R_{\sqsubseteq}\cap \Theta^R_{B\succsim A}$ is empty. 

To achieve this reduction, let us study the relationships between $\Theta^R_{A\succsim B}$, $\Theta^R_{B\succsim A}$ and $\Theta^R_{\sqsubseteq}$. For visual support, the reader may refer to Figure~\ref{fig:ThetaR}. We recall that we denote by $\Theta^R_{B\succsim A}$ the set $\{\theta \in \Theta^R : B\!\succ^R_{\theta}\!A \mbox{ or } B\!\sim^R_{\theta}\!A\}$. As one of the relations $A\!\succ_\theta\!B$ or $B\!\succ_\theta\!A$ or $A\!\sim_\theta \!B$ holds for any $\theta\!\in\!\Theta^R$, we have that $\Theta^R\!=\!\Theta^R_{A\succsim B}\!\cup\!\Theta^R_{B\succsim A}$. Consequently, $\Theta^R_{\sqsubseteq}\!\subseteq\!\Theta^R_{A\succsim B}\!\cup\!\Theta^R_{B\succsim A}$ because $\Theta^R_{\sqsubseteq}\!\subseteq\!\Theta^R$. Furthermore, $\Theta^R_{A\succsim B}\!\cap\!\Theta^R_{B\succsim A}\!=\!\{\theta\!\in\!\Theta^R:A\!\sim_\theta\!B\}\!\neq\!\emptyset$ as soon as there exists $\theta\!\in\!\Theta^R$ for which $A \sim_\theta B$. 

\begin{figure}[!h]
\begin{center}
\scalebox{0.735}{\begin{tikzpicture}
  \draw[fill=none, pattern=north east lines] (0,0) circle (2cm);

  \draw[fill=none, pattern=dots] (3cm,0) circle (2cm);

  \begin{scope}
    \clip (0,0) circle (2cm);
    \fill[white] (3cm,0) circle (2cm);
  \end{scope}

  \begin{scope}
    \clip (1.5cm,0) circle (1cm);
    \fill[red, pattern=north east lines] (0,0) circle (2cm);
  \end{scope}

  \begin{scope}
    \clip (1.5cm,0) circle (1cm);
    \fill[red, pattern=dots] (3cm,0) circle (2cm);
  \end{scope}

  \begin{scope}
    \clip (0,0) circle (2cm);
    \fill[white] (3cm,0) circle (2cm);
 \end{scope}
  \draw[fill=none] (1.5cm,0) circle (1cm);
  \node at (1.5cm,0) {$\Theta_{\sqsubseteq}^{R}$};

  \node at (-3cm,0) {$\Theta^{R}_{A \succsim B}$};
  \node at (6cm,0) {$\Theta^{R}_{B \succsim A}$};
\end{tikzpicture}}
\caption{$(\Theta^R_{\sqsubseteq}\cap \Theta^R_{B\succsim A}\!=\!\emptyset \Leftrightarrow A \succ^R_{\sqsubseteq} B)$ and $(\Theta^R_{\sqsubseteq}\cap \Theta^R_{A\succsim B}\!=\!\emptyset \Leftrightarrow B \succ^R_{\sqsubseteq} A)$.}
\label{fig:ThetaR}
\end{center}
\end{figure}
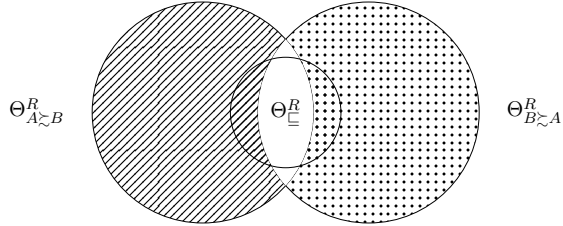

To evaluate whether a robust ordinal dominance relation holds between two subsets $A$ and $B$, we examine if one of the following conditions holds:%
\begin{enumerate}[noitemsep,topsep=2pt]
    \item[$(i)$] $\Theta^{R}_{\sqsubseteq} \cap \Theta^R_{B\succsim A}\!=\!\emptyset$,
    \item[$(ii)$] $\Theta^{R}_{\sqsubseteq} \cap \Theta^R_{A\succsim B}\!=\!\emptyset$. 
\end{enumerate}
We have indeed the following result:
\begin{restatable}{proposition}{propROD}\label{propROD}
For any $A,B \! \subseteq \mathcal{F}$, we have $A\succ_{\sqsubseteq}^{R} B \Leftrightarrow \Theta^{R}_{\sqsubseteq} \cap \Theta^{R}_{B\succsim A} = \emptyset.$
\end{restatable}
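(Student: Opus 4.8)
The plan is to reduce the statement to a trichotomy over the admissible models in $\Theta^R$ and then read off both implications. The central fact I would establish first is that, for every $\theta\in\Theta^R$, exactly one of the three relations $A\succ_\theta^R B$, $B\succ_\theta^R A$, $A\sim_\theta^R B$ holds. Exhaustiveness is already recorded in the discussion preceding the statement (``one of the relations $A\succ_\theta B$ or $B\succ_\theta A$ or $A\sim_\theta B$ holds for any $\theta\in\Theta^R$''), and it follows by contraposition: if neither $A\succ_\theta^R B$ nor $B\succ_\theta^R A$, then there are $v,v'\in V_\theta^R$ with $f_{\theta,v}(B)\ge f_{\theta,v}(A)$ and $f_{\theta,v'}(A)\ge f_{\theta,v'}(B)$, which is exactly the incomparability witness for $A\sim_\theta^R B$. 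Mutual exclusivity is the part I would check with more care: because $\theta\in\Theta^R$ forces $V_\theta^R\neq\emptyset$, and because the inequality in the definition of $\succ_\theta^R$ is strict while the one in the incomparability relation is non-strict, $A\succ_\theta^R B$ (i.e. $f_{\theta,v}(A)>f_{\theta,v}(B)$ for all $v$) leaves no room for a $v'$ with $f_{\theta,v'}(B)\ge f_{\theta,v'}(A)$, ruling out both $B\succ_\theta^R A$ and $A\sim_\theta^R B$.

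With the trichotomy in hand, I would unfold the target set as $\Theta^R_{B\succsim A}=\{\theta\in\Theta^R: B\succ_\theta^R A\}\cup\{\theta\in\Theta^R: A\sim_\theta^R B\}$, using that incomparability is symmetric so that $B\sim_\theta^R A$ and $A\sim_\theta^R B$ coincide. Combining exhaustiveness and mutual exclusivity yields the workhorse equivalence, valid for each individual $\theta\in\Theta^R$: a model satisfies $A\succ_\theta^R B$ if and only if it lies in neither of the two sets above, i.e. if and only if $\theta\notin\Theta^R_{B\succsim A}$. Since $\Theta^R_{\sqsubseteq}\subseteq\Theta^R$, this per-model equivalence applies in particular to every $\theta\in\Theta^R_{\sqsubseteq}$.

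The two directions then follow immediately from the definition $A\succ_{\sqsubseteq}^R B\iff \forall\,\theta\in\Theta^R_{\sqsubseteq},\ A\succ_\theta^R B$. For the forward implication, if $A\succ_\theta^R B$ holds for all $\theta\in\Theta^R_{\sqsubseteq}$, then (by the mutual-exclusivity half) each such $\theta$ fails to belong to $\Theta^R_{B\succsim A}$, so $\Theta^R_{\sqsubseteq}\cap\Theta^R_{B\succsim A}=\emptyset$. For the converse, if $\Theta^R_{\sqsubseteq}\cap\Theta^R_{B\succsim A}=\emptyset$, then every $\theta\in\Theta^R_{\sqsubseteq}$ lies outside $\Theta^R_{B\succsim A}$ and hence satisfies $A\succ_\theta^R B$ (by the exhaustiveness half), which is precisely $A\succ_{\sqsubseteq}^R B$.

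The only genuine subtlety, and hence the step I would treat as the main obstacle, is the mutual-exclusivity half of the trichotomy: it is what makes $A\succ_\theta^R B$ and $\theta\notin\Theta^R_{B\succsim A}$ equivalent rather than merely one-directional, and it is precisely where the nonemptiness of $V_\theta^R$ and the asymmetry between the strict ``$>$'' defining $\succ_\theta^R$ and the non-strict ``$\ge$'' defining $\sim_\theta^R$ are used. Everything else is a direct rewriting of the definitions of $\succ_{\sqsubseteq}^R$ and $\Theta^R_{B\succsim A}$.
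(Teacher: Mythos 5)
Your proposal is correct and follows essentially the same route as the paper: the paper's proof is a one-line chain of equivalences whose middle step is precisely the per-model trichotomy ($A\succ_\theta^R B$ iff neither $B\succ_\theta^R A$ nor $A\sim_\theta^R B$) that you spell out and justify. You simply make explicit the exhaustiveness/exclusivity argument (including the role of $V_\theta^R\neq\emptyset$) that the paper leaves to the preceding discussion.
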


\noindent \emph{Proof.} It follows from the following sequence of equivalences:
\begin{equation*}
\begin{split}
A \succ^R_{\sqsubseteq} B \Leftrightarrow \forall \theta \in \Theta^R_{\sqsubseteq}, A \succ^R_\theta B & \Leftrightarrow \forall \theta \in \Theta^R_{\sqsubseteq}, B \not\succ^R_\theta A \mbox{ and } A \not\sim^R_\theta B\\  & \Leftrightarrow \Theta^{R}_{\sqsubseteq} \cap \Theta^{R}_{B\succsim A} = \emptyset.  ~~~~~~~~~~~~~~~~~~~~~~~~~~~~~~~\square
\end{split} 
\end{equation*}

\medskip

Symmetrically, we have obviously that $B\succ_{\Theta}^{R} A \Leftrightarrow \Theta^{R}_{\sqsubseteq} \cap \Theta^{R}_{A\succsim B} = \emptyset$. To test whether $\Theta^{R}_{\sqsubseteq} \cap \Theta^R_{B\succsim A}\!=\!\emptyset$, the mathematical programming approach we propose applies to cases where relation $\sqsubseteq$ is based on a function $\xi$. 
The approach starts by computing a \emph{single} model $\theta\!\in\!\Theta^{R}$ minimizing $\xi(\theta)$, which is enough for determining the value $\xi(\theta)$ of \emph{any} $\theta\!\in\!\Theta^{R}_{\sqsubseteq}$, as they all share the same optimal value $\xi(\theta)$. We now study the complexity of computing such an optimal $\theta$ in $\Theta^R$. More precisely, we study the complexity of the following decision problem MIN-$\theta$-$\xi$, for $\xi\!\in\!\{\mathtt{card},\mathtt{ws},\mathtt{deg}, \mathtt{lex}\}$ (as is well-known, the optimization problem is at least as hard as its decision variant):
\vspace{-0.5cm}
\begin{cproblem}{MIN-$\theta$-$\xi$}
\textbf{INPUT:} A set $\AlternativeSet$ of alternatives, a set $R = \{(A,B), A,B \in \AlternativeSet\}$ of strict pairwise preferences, an integer $\tau \in \mathbb{Z}^+$. \\
\textbf{QUESTION:} Does there exist $\theta \in \Theta^{R}$ such that $\xi(\theta) \le \tau$?
\end{cproblem}

\subsection{Computational complexity of MIN-$\theta$-$\xi$ for $\xi\!\in\!\{\mathtt{card},\mathtt{ws},\mathtt{lex},\mathtt{deg}\}$}
\label{sec:complexity}

We show here that MIN-$\theta$-$\xi$ is NP-hard for $\sqsubseteq\, \in\!\{\mathtt{ws},\mathtt{card},\mathtt{lex}\}$, while it can be solved in polynomial time for $\sqsubseteq\, =\! \mathtt{deg}$.

\begin{theorem}
MIN-$\theta$-$\mathtt{card}$ and MIN-$\theta$-$\mathtt{ws}$ are NP-complete.
\end{theorem}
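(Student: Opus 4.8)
The plan is to prove both membership in NP and NP-hardness, and for the latter to give a single reduction that works simultaneously for $\mathtt{card}$ and $\mathtt{ws}$.

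For membership, the natural certificate for a yes-instance is a set $\theta$ itself together with the claim $V_\theta^R\neq\emptyset$, which is verifiable in polynomial time by testing feasibility of the linear system defining $V_\theta^R$ recalled above. The only difficulty is bounding the certificate size, since a priori an optimal $\theta$ could contain exponentially many subsets. I would therefore first establish a support-reduction lemma: if $V_\theta^R\neq\emptyset$ for some $\theta$, then there is $\theta'\subseteq\theta$ with $V_{\theta'}^R\neq\emptyset$ and $|\theta'|\le|R|$. The proof is the standard basic-solution argument: take a feasible $v$ whose support $\theta'=\mathrm{supp}(v)$ is of minimum cardinality; if $|\theta'|>|R|$, the $|R|\times|\theta'|$ constraint matrix has more columns than rows, hence linearly dependent columns, so there is a nonzero kernel direction $d$ along which every constraint stays satisfied, and moving along $d$ until a coordinate vanishes strictly shrinks the support, a contradiction. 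Since $\theta'\subseteq\theta$ we have $\mathtt{card}(\theta')\le\mathtt{card}(\theta)$ and $\mathtt{ws}(\theta')\le\mathtt{ws}(\theta)$, so any yes-instance admits a witness with at most $|R|$ subsets (each encoded on $|\mathcal{F}|$ bits), which is polynomial and checkable in polynomial time. This settles membership for both problems.

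For hardness I would reduce from \textsc{Vertex Cover}. Given a graph $G=(W,E)$ and integer $\tau$, set $\mathcal{F}=\{a_w:w\in W\}$, let $\AlternativeSet=\{\emptyset\}\cup\{\{a_u,a_v\}:\{u,v\}\in E\}$, and take $R=\{(\{a_u,a_v\},\emptyset):\{u,v\}\in E\}$; the same instance and threshold $\tau$ serve both $\mathtt{card}$ and $\mathtt{ws}$. The key observation is that every alternative has at most two elements, so any $S\in\theta$ with $|S|\ge 3$ has $I_X(S)=0$ for all $X\in\AlternativeSet$ and is useless, while $f_{\theta,v}(\{a_u,a_v\})-f_{\theta,v}(\emptyset)$ depends only on whether $\theta$ contains $\{a_u\}$, $\{a_v\}$, or $\{a_u,a_v\}$. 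Hence $V_\theta^R\neq\emptyset$ iff, for every edge $\{u,v\}$, $\theta$ contains at least one of these three subsets, feasibility being witnessed by setting every selected value to $1$.

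It then remains to show that $\min_{\theta\in\Theta^R}\mathtt{card}(\theta)$ and $\min_{\theta\in\Theta^R}\mathtt{ws}(\theta)$ both equal the minimum vertex cover size of $G$. A vertex cover $C$ gives $\theta=\{\{a_w\}:w\in C\}$ with $\mathtt{card}(\theta)=\mathtt{ws}(\theta)=|C|$, so both minima are at most the minimum cover size. Conversely, given any feasible $\theta$ (using only singletons and pairs, by the remark above), replacing each pair $\{a_u,a_v\}\in\theta$ by the singleton $\{a_u\}$ keeps edge $\{u,v\}$ covered and increases neither objective, yielding a singleton-only $\theta'$ whose vertices form a cover of size $|\theta'|\le\mathtt{card}(\theta)\le\mathtt{ws}(\theta)$. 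Thus ``$\exists\theta\in\Theta^R,\ \mathtt{card}(\theta)\le\tau$'' and ``$\exists\theta\in\Theta^R,\ \mathtt{ws}(\theta)\le\tau$'' are each equivalent to $G$ having a vertex cover of size $\le\tau$, giving NP-hardness of both. The genuinely non-routine step I anticipate is the membership bound: because an optimal $\theta$ could a priori be exponential, it is not obvious the problem is even in NP, and the support-reduction lemma is exactly what forces a polynomial certificate; the other delicate point is the replacement/domination argument in the reduction, where the weight bookkeeping must be handled carefully so that pairs (cost $2$) are provably never preferable to singletons (cost $1$) in the $\mathtt{ws}$ case.
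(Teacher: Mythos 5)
Your proof is correct, and its core coincides with the paper's. For hardness, the paper reduces from \textsc{Hitting Set} with $\AlternativeSet=\mathcal{S}\cup\{\emptyset\}$ and $R=\{(S,\emptyset):S\in\mathcal{S}\}$; your \textsc{Vertex Cover} reduction is exactly the $2$-uniform special case of that gadget (edges playing the role of the sets to be hit), with the same key observations: $\theta$ is feasible iff every edge/set contains some member of $\theta$, and any feasible $\theta$ can be turned into one made of singletons without increasing $\mathtt{card}$ or $\mathtt{ws}$, so both minima equal the minimum cover size. Where you genuinely diverge is NP-membership. The paper simply exhibits one feasible parameter set of polynomial size, namely $\theta=\{A\in\mathcal{A}:(A,\cdot)\in R\mbox{ or }(\cdot,A)\in R\}$ with $|\theta|\le 2|R|$, and concludes that optimal witnesses are small. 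Your support-reduction lemma (prune any feasible $v$ to a basic solution supported on at most $|R|$ subsets, using a kernel direction of the $|R|\times|\theta'|$ constraint matrix, which preserves the strict inequalities exactly) is a more self-contained and slightly stronger route: it shrinks the \emph{given} witness $\theta$ to a sub-witness $\theta'\subseteq\theta$ with $\mathtt{card}(\theta')\le\mathtt{card}(\theta)$ and $\mathtt{ws}(\theta')\le\mathtt{ws}(\theta)$, so the threshold $\tau$ is certifiably met by a polynomial-size certificate without having to argue about which $\theta$ attains the minimum, and it improves the bound from $2|R|$ to $|R|$. Both membership arguments are valid; yours costs a little linear algebra but avoids having to verify that the paper's specific $\theta$ lies in $\Theta^R$.
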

\begin{proof}
The membership of MIN-$\theta$-$\mathtt{card}$ to NP follows from the fact that $\min_{\theta} \texttt{card}(\theta)\!\leq\!2|R|$ and checking that $\theta\!\in\!\Theta^R$ can be done in polynomial time in $|R|$ and $|\theta|$. Indeed, the parameter set $\theta\!=\!\{A\!\in\!\mathcal{A}:(A,\cdot)\!\in\!R \mbox{ or } (\cdot,A)\!\in\!R\}$ obviously belongs to $\Theta^R$, and $|\theta|\!\leq\!2|R|$. The proof that MIN-$\theta$-$\mathtt{ws}$ belongs to NP is similar, based on the fact that
$\min_{\theta} \texttt{ws}(\theta)\!\leq\!2|R|\times n$.

To prove the NP-hardness, we use a reduction from Hitting Set: %
\vspace{-0.5cm}
\begin{cproblem}{Hitting Set}
    \textbf{INPUT:} Given a set of $n$ elements: $\mathcal{X} = \{x_i\}_{1 \le i \le n}$, a family of $m$ sets $\mathcal{S} = \{S_i : S_i \subseteq \mathcal{X}, 1 \le i \le m\}$, and an integer $\tau \in \mathbb{Z}^+$.\\
    \textbf{QUESTION:} Does there exist $\mathcal{X}' \subseteq \mathcal{X}$ such that $\forall S_i \in \mathcal{S}, S_i\cap \mathcal{X}'  \neq \emptyset$ and $|\mathcal{X}'|\leq \tau$?
\end{cproblem}
Given an instance $(\mathcal{X}, \mathcal{S}, \tau)$ of the Hitting Set problem, we define the following instance $(\AlternativeSet, R, \tau')$ of MIN-$\theta$-$\mathtt{card}$ (resp. MIN-$\theta$-$\mathtt{ws}$).\\
We let $\AlternativeSet = \mathcal{S} \cup \{\emptyset\}$, $\tau' = \tau$, and consider the following set of preferences:
$$
R = \{(S,\emptyset) : S \in \mathcal{S}\}.
$$
Now we show that $(\mathcal{X}, \mathcal{S}, \tau)$ is a yes-instance of Hitting Set iff  $(\AlternativeSet, R, \tau')$ is a yes-instance of MIN-$\theta$-$\mathtt{card}$ (resp. MIN-$\theta$-$\mathtt{ws}$).
Note that a set $\theta$ belongs to $\Theta^R$ if and only if it satisfies the following condition: 
\[
     \forall (S, \emptyset)\!\in\!R,\, \exists\,T\!\in\! \theta \text{ such that } T \subseteq S. \tag{C}
\]
Indeed, each preferences in $R$ can then be satisfied by assigning positive values to parameters entailed by the elements of $\theta$. 
Moreover, note that if a set $\theta$ satisfies C and $\exists\,T\!\in\!\theta$ such that $|T|>1$, then the set $\theta'$ obtained from $\theta$ by replacing $T$ by any singleton $\{x\}\!\subset\!T$ also satisfies C. 
Hence, within the sets satisfying C and minimizing $\mathtt{card}$, there exists a set $\theta'$ compounded only of singletons, minimizing both $\mathtt{card}$ and $\mathtt{ws}$ (because $\mathtt{card}(\theta)\!=\!\mathtt{ws}(\theta)$ if $\theta$ is compounded only of singletons). By taking $\mathcal{X'}\!=\!\{x:\{x\} \in \theta'\}$, we obtain a hitting set of size $|\mathcal{X'}|\le \tau$.
This yields the following conclusion: there exists a hitting set of size $s \le \tau$ if and only if there exists a set $\theta$ satisfying C such that $\mathtt{card}(\theta)\!=\!s$ (resp. $\mathtt{ws}(\theta) \!=\!s$). This argument completes the proof.
\end{proof}

The following result is a direct consequence of the previous one:

\begin{corollary}\label{Corol-lexNPcomplete}
    MIN-$\theta$-$\mathtt{lex}$ is NP-hard.
\end{corollary}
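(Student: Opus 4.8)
The plan is to recycle, essentially verbatim, the Hitting Set reduction built in the proof of Theorem~1 and to argue that the instance it produces already forces the lexicographically simplest model to have a transparent structure. Given a Hitting Set instance $(\mathcal{X},\mathcal{S},\tau)$ with $|\mathcal{X}|\!=\!n$, I would construct the very same MIN-$\theta$ instance, namely $\AlternativeSet\!=\!\mathcal{S}\cup\{\emptyset\}$ and $R\!=\!\{(S,\emptyset):S\in\mathcal{S}\}$, and then set the threshold to $\tau'\!=\!n4^n+(n2^n+1)\tau$. Recall from Theorem~1 that $\theta\in\Theta^R$ iff condition (C) holds, and that (C) can always be satisfied by a family of singletons, so $\Theta^R$ contains models of degree~$1$.

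The first step I would carry out is to check that the coefficients in $\mathtt{lex}(\theta)=n4^n\,\mathtt{deg}(\theta)+n2^n\,\mathtt{card}(\theta)+\mathtt{ws}(\theta)$ make the numeric function faithful to the priority of $\mathtt{deg}$ over the two lower-order terms. Assuming without loss of generality that $\emptyset\notin\theta$ (an empty component only contributes a constant, killed by $f(\emptyset)\!=\!0$), any $\theta\subseteq 2^{\mathcal{F}}\setminus\{\emptyset\}$ satisfies $\mathtt{card}(\theta)\le 2^n-1$ and $\mathtt{ws}(\theta)\le n2^{n-1}$, whence $n2^n\,\mathtt{card}(\theta)+\mathtt{ws}(\theta)<n4^n$. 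Therefore a model of degree $\ge 2$ has $\mathtt{lex}\ge 2n4^n$, while every degree-$1$ model has $\mathtt{lex}<2n4^n$; so the minimum of $\mathtt{lex}$ over $\Theta^R$ is attained by a model of degree~$1$.

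Next I would exploit the structure of the reduced instance at degree~$1$. A degree-$1$ model $\theta\in\Theta^R$ consists only of singletons, condition (C) says exactly that $\mathcal{X}'\!=\!\{x:\{x\}\in\theta\}$ is a hitting set, and $\mathtt{card}(\theta)=\mathtt{ws}(\theta)$, so that $\mathtt{lex}(\theta)=n4^n+(n2^n+1)\,\mathtt{card}(\theta)$. Minimizing over degree-$1$ models thus minimizes $\mathtt{card}(\theta)$, i.e.\ produces a minimum hitting set; hence $\min_{\theta\in\Theta^R}\mathtt{lex}(\theta)=n4^n+(n2^n+1)k$, where $k$ is the minimum hitting-set size. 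With $\tau'$ as chosen, this yields $\min_\theta\mathtt{lex}(\theta)\le\tau' \Leftrightarrow k\le\tau$, so $(\AlternativeSet,R,\tau')$ is a yes-instance of MIN-$\theta$-$\mathtt{lex}$ iff $(\mathcal{X},\mathcal{S},\tau)$ is a yes-instance of Hitting Set. Since the construction (including the $O(n)$-bit number $\tau'$) is computable in polynomial time, NP-hardness follows.

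The only delicate point, and the one I would write out carefully, is the dominance argument of the second step: one must be sure that the gap $n4^n$ between consecutive degree levels genuinely exceeds every admissible value of the combined lower-order terms across the whole exponential space of candidate models, so that the $\sqsubseteq_{\mathtt{lex}}$-simplest $\theta$ is provably of degree~$1$. Everything else is immediate from the already-established equivalence between (C) and membership in $\Theta^R$, and I would not attempt to claim NP-completeness, since the corollary only asserts NP-hardness.
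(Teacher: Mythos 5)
Your proof is correct, but it takes a genuinely different route from the paper's. The paper proves the corollary by a polynomial-time Turing reduction from MIN-$\theta$-$\mathtt{card}$ (just shown NP-hard): it queries MIN-$\theta$-$\mathtt{lex}$ once for each candidate degree $d\in\{0,1,\ldots,|\mathcal{F}|\}$ with threshold $\tau'=dn4^n+(\tau+1)n2^n$, and concludes from the answers. You instead give a direct many-one reduction from Hitting Set, reusing the gadget $\AlternativeSet=\mathcal{S}\cup\{\emptyset\}$, $R=\{(S,\emptyset):S\in\mathcal{S}\}$ from Theorem~1 and choosing the single threshold $\tau'=n4^n+(n2^n+1)\tau$; the heart of your argument is the verification that $n2^n\,\mathtt{card}(\theta)+\mathtt{ws}(\theta)<n4^n$ for every $\theta\subseteq 2^{\mathcal{F}}\setminus\{\emptyset\}$, which forces the $\mathtt{lex}$-minimizer to have degree~$1$ and hence to be (the singleton encoding of) a minimum hitting set, giving $\min_\theta\mathtt{lex}(\theta)=n4^n+(n2^n+1)k$ exactly. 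Your version is more self-contained and yields hardness under Karp reductions; it is also arguably tighter than the paper's one-line argument, which does not address the possibility that a ``yes'' answer at degree $d$ is witnessed by a model of strictly smaller degree but cardinality exceeding $\tau$. The paper's approach, in exchange, is shorter and directly leverages the theorem it follows. One small caveat to add to your write-up: when some $S_i=\emptyset$ the pair $(\emptyset,\emptyset)$ makes $\Theta^R=\emptyset$ and your claim that degree-$1$ models exist fails (both problems are then trivially no-instances, so the reduction still answers correctly, but the intermediate assertion needs the standard assumption that the sets $S_i$ are nonempty).
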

\begin{proof}
Given an instance $(\AlternativeSet,R,\tau)$ of the MIN-$\theta$-$\mathtt{card}$ problem, we could solve for each degree $d \in \{0, 1, \ldots |\mathcal{F}|\}$ an instance $(\AlternativeSet,R,\tau')$ of the MIN-$\theta$-$\mathtt{lex}$ problem where $\tau'= dn4^n + (\tau + 1) n2^n$. 
\end{proof}

In contrast, we show a polynomial-time complexity result for MIN-$\theta$-$\mathtt{deg}$, by resorting to the \emph{kernel trick}, widely used in machine learning \cite[see e.g.,][]{scholkopf2018learning}. Given a vector space $\mathcal{X}$ of dimension $n_{\mathcal{X}}$ and a transformation function $\varphi:\mathcal{X}\rightarrow \mathcal{Y}$, where the dimension $n_{\mathcal{Y}}$ of vector space $\mathcal{Y}$ is exponential in $n_{\mathcal{X}}$, the kernel trick consists in computing the scalar products $\langle \varphi(x),\varphi(y) \rangle$ of $x,y\!\in\!\mathcal{X}$ in polynomial time in $n_\mathcal{X}$, by using a kernel function $K(x,y)$ that returns the value $\langle \varphi(x),\varphi(y) \rangle$ without requiring to explicit $\varphi(x)$ and $\varphi(y)$. In our setting, $\mathcal{X}$ is the set of characteristic vectors of subsets $A$ of $\mathcal{F}$, and $\mathcal{Y}$ the set of ``augmented'' characteristic vectors containing additional dimensions corresponding to binary values $I_A(S)$ for $S \in [\mathcal{F}]^\tau$ (more details in the proof). The complexity result is formulated as follows:

\begin{theorem}\label{th: deg}
    MIN-$\theta$-$\mathtt{deg}$ can be solved in polynomial time in $|R|$ and $n$.
\end{theorem}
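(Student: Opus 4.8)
The plan is to reduce the decision problem to a single linear feasibility test over the maximal model of degree $\tau$, and then to solve that test — which lives in an exponential-dimensional parameter space — in polynomial time via the kernel trick.

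First I would exploit the monotonicity of $\Theta^R$ under inclusion. Let $[\mathcal{F}]^\tau = \{S \subseteq \mathcal{F} : 1 \le |S| \le \tau\}$ be the largest parameter set of degree at most $\tau$. I claim that $(\AlternativeSet,R,\tau)$ is a yes-instance iff $[\mathcal{F}]^\tau \in \Theta^R$. Indeed, any $\theta \in \Theta^R$ with $\mathtt{deg}(\theta) \le \tau$ satisfies $\theta \subseteq [\mathcal{F}]^\tau$; extending any $v \in V_\theta^R$ to $[\mathcal{F}]^\tau$ by assigning the value $0$ to the new parameters leaves each $f_{\theta,v}(A)$ unchanged, so $V_{[\mathcal{F}]^\tau}^R \neq \emptyset$. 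Conversely $[\mathcal{F}]^\tau$ itself has degree at most $\tau$, so it is a valid witness. Hence it suffices to decide whether $V_{[\mathcal{F}]^\tau}^R \neq \emptyset$.

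Next I would recast this feasibility test in the augmented space. Writing $\varphi(A) = (I_A(S))_{S \in [\mathcal{F}]^\tau}$, the condition $V_{[\mathcal{F}]^\tau}^R \neq \emptyset$ is exactly the existence of $v$ with $\langle v, \varphi(A) - \varphi(B)\rangle \ge 1$ for every $(A,B) \in R$ (the right-hand side $1$ being without loss of generality by the scaling argument already noted). This is a hard-margin linear separation problem for the $|R|$ vectors $w_{(A,B)} = \varphi(A) - \varphi(B)$, all carrying label $+1$, whose feasibility is equivalent to $0 \notin \mathrm{conv}\{w_{(A,B)}\}$. It can be decided by the hard-margin SVM: the primal $\min \frac12\|v\|^2$ subject to $\langle v, w_{(A,B)}\rangle \ge 1$ is feasible iff its Lagrangian dual, $\max_{\lambda \ge 0} \sum \lambda_{(A,B)} - \frac12 \sum \lambda \lambda' \langle w, w'\rangle$, is bounded. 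The crucial point is that this dual depends on the exponential-dimensional data solely through the Gram matrix $\langle w_{(A,B)}, w_{(A',B')}\rangle$, which the kernel trick evaluates in polynomial time: since $\langle \varphi(X), \varphi(Y)\rangle = |\{S : S \subseteq X\cap Y,\ 1 \le |S| \le \tau\}| = \sum_{k=1}^\tau \binom{|X\cap Y|}{k} =: K(X,Y)$, each entry expands as $K(A,A') - K(A,B') - K(B,A') + K(B,B')$ and is computed in $O(n)$ time. The resulting $|R|\times|R|$ dual QP, with a positive semidefinite kernel matrix, is solved (and its boundedness detected) in time polynomial in $|R|$ and $n$.

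The main obstacle will be making the ``solely through inner products'' step fully rigorous. I must verify that deciding boundedness of this convex QP genuinely certifies primal feasibility — via strong duality together with a Gordan-type alternative showing infeasibility $\iff 0 \in \mathrm{conv}\{w_{(A,B)}\} \iff$ dual unbounded — and that no computation ever materializes a vector $\varphi(A)$ or $v$ explicitly. A clean route is to invoke the representer theorem, so that the optimal $v = \sum \lambda_{(A,B)} w_{(A,B)}$ and every derived quantity ($\|v\|^2$ and the margins $\langle v, w\rangle$) reduce to entries of the kernel matrix. Boundedness of the dual then decides $V_{[\mathcal{F}]^\tau}^R \neq \emptyset$, and hence MIN-$\theta$-$\mathtt{deg}$, in polynomial time in $|R|$ and $n$.
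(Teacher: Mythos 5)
Your proposal is correct and follows essentially the same route as the paper's own proof: test feasibility for the maximal model $[\mathcal{F}]^\tau$ via the hard-margin minimum-norm quadratic program, pass to its (Wolfe/Lagrangian) dual, and evaluate the Gram matrix through the kernel $K^{(\tau)}(X,Y)=\sum_{k=1}^{\tau}\binom{|X\cap Y|}{k}$, deciding feasibility by boundedness of the dual. The only addition is that you spell out explicitly (via the zero-extension argument) why it suffices to test $\theta=[\mathcal{F}]^\tau$, a step the paper leaves implicit.
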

\begin{proof}
Let $(\mathcal{A},R,\tau)$ be an instance of MIN-$\theta$-$\mathtt{deg}$. 
We wish to determine if preferences in $R$ can be represented by a $\theta$-additive model with $\theta\!=\![\mathcal{F}]^\tau$. 
For notational convenience, we set $\theta^{(\tau)}\!=\![\mathcal{F}]^\tau$ and $n_\tau\!=\!|\theta^{(\tau)}|\!=\! \sum_{i=1}^\tau {n \choose i}$. 
We associate to $\theta^{(\tau)}$ the vector $\overrightarrow{\theta^{(\tau)}}\!=\! (S_1,\ldots,S_{n_\tau})$, where subsets $S\!=\!\{a_{i_1},\ldots,a_{i_k}\}$ ($i_1\!<\! \ldots\!<\!i_k$) are indexed in lexicographic order of vectors $(|S|,i_1,\ldots,i_k)$. 

For instance, if  $\mathcal{F}=\{a_1,a_2,a_3\}$ and $\theta = \theta^{(3)}$ then 

$\overrightarrow{\theta} = (\{a_1\},\{a_2\},\{a_3\},$ $\{a_1,a_2\},\{a_1,a_3\},\{a_2,a_3\}, \{a_1,a_2,a_3\})$.  Additionally, for a value function $v:\theta\rightarrow\mathbb{R}$, we denote by $\overrightarrow{v} = (v_{S_1},\ldots,v_{S_{n_\tau}})$ the vector of values associated to the elements of $\overrightarrow{\theta^{(\tau)}}$ ordered in the same fashion.  
Finally, given $A \in \AlternativeSet$, we denote by $\overrightarrow{A}_{\tau}$ the binary vector $\overrightarrow{A_\tau} = (I_A(S_1),\ldots,I_A(S_{n_\tau}))$ where $I_A(S_i)$ is the indicator function of $S_i \in \theta^{(\tau)}$. 

Problem MIN-$\theta$-$\mathtt{deg}$ %
evaluates if the following proposition holds:
$$
\exists \overrightarrow{v} \in \mathbb{R}^{n_\tau} \text{ s.t. } \forall (A,B) \in R; \overrightarrow{A}_\tau \overrightarrow{v}^T > \overrightarrow{B}_\tau \overrightarrow{v}^T.
$$
A value vector $\overrightarrow{v}$ of minimum norm can be determined by solving the following convex quadratic program:
\begin{align*}
& \min_{\overrightarrow{v} \in \mathbb{R}^{n_\tau}} &  \frac{1}{2} \overrightarrow{v} \overrightarrow{v}^T & \\
& \mbox{s.t.} & \overrightarrow{A}_\tau\overrightarrow{v}^T \ge \overrightarrow{B}_\tau \overrightarrow{v}^T + 1& & \forall (A,B) \in R 
\end{align*}
Using the same trick as \citeauthor{domshlak2005unstructuring} \cite{domshlak2005unstructuring}, instead of solving this program whose number $n_\tau$ of variables is not polynomial in the size of our instance of MIN-$\theta$-$\mathtt{deg}$ (because $\tau$ is an input variable and not a constant), we consider its Wolfe dual defined by:
\begin{align*}
& & \max_{\alpha \in \mathbb{R}^{|R|}} &  \sum_{(A,B) \in R}\!\!\!\alpha_{(A,B)}  - \frac{1}{2} \sum_{(A,B) \in R} \sum_{(C,D) \in R} \!\!\!\alpha_{(A,B)} \alpha_{(C,D)} (\overrightarrow{A}_\tau - \overrightarrow{B}_\tau)(\overrightarrow{C}_\tau - \overrightarrow{D}_\tau)^T & \\
& \mbox{s.t.} &  & \alpha \geq 0 & & 
\end{align*}

By defining the kernel function
$
K^{(\tau)}(A, B) = \overrightarrow{A}_\tau {\overrightarrow{B}_{\tau}}^{T},
$
the previous program can be written as:
\begin{align*}
\max_{\alpha \in (\mathbb{R^+})^{|R|}} &  & \sum_{(A,B) \in R} \alpha_{(A,B)} &  -\frac{1}{2} \sum_{(A,B) \in R} \sum_{(C,D) \in R} \alpha_{(A,B)} \alpha_{(C,D)} &  \\
 &  &  & (K^{(\tau)}(A,C) - K^{(\tau)}(A,D) - K^{(\tau)}(B,C) + K^{(\tau)}(B,D)) 
\end{align*}

which can be solved in polynomial time in $|R|$ and $n$ provided that $K^{(\tau)}(X, Y)$ can be evaluated in polynomial time in $n$ without expliciting $X$ and $Y$.

Indeed, since the reformulation yields a convex quadratic program of polynomial size in the input data, the problem can then be solved in polynomial time (by polynomial time solvability of convex quadratic programming \cite{kozlov1979polynomial,kozlov1980polynomial}). 
We now prove that $K^{(\tau)}(X,Y)$ can be efficiently computed without expliciting $X$ and $Y$.
Let $k$ be the size of the intersection between $X$ and $Y$, i.e., $k = |X \cap Y|$. 
Note that $K^{(\tau)}(X,Y)$ counts the number of parameters of $\theta^{(\tau)}$ that are subsets of both $X$ and $Y$. 
We conclude by noting that the number of such elements corresponds to $ \sum_{i=1}^\tau {k \choose i}$, i.e., the number ($<\!2^n$) of non-empty subsets of size less than or equal to $\tau$ in $X \cap Y$. 
\end{proof}

\begin{remark}
Note that \citeauthor{TehraniStrickertHullermeier2014} \cite{TehraniStrickertHullermeier2014} and \citeauthor{herin2023learning}
\cite{herin2023learning} have proposed kernel functions $K(x,y)$ that return the scalar product $\langle \varphi(x),\varphi(y) \rangle$ of augmented vectors $\varphi(x),\varphi(y)$ used to obtain an additive expression $\langle m,\varphi(x)\rangle$ of a discrete Choquet integral $C(x)$, where $m$ is the vector of M\"obius masses obtained from the capacity used in $C(x)$. It turns out that there is a close link between $f_{\theta,v}$ and a Choquet integral $C(x)$ expressed as $\langle m,\varphi(x)\rangle$ (note however that we do not impose the constraints on the $v(S)$ values ensuring the monotonicity of the capacity, or the normalization constraint $\sum_S v(S)\!=\!1$). However, their kernel functions do not use the same calculations as ours: we take advantage of the particular case we study, where all components of $x$ take binary values, to compute the kernel function in $O(n)$ instead of $O(n^2)$.
\end{remark}

Algorithm~\ref{alg:degree} takes as input a set $R$ of strict pairwise preferences and computes $\min\{\mathtt{deg}(\theta)\!:\!\theta\!\in\!\Theta^R\}$ by solving a sequence of convex quadratic programs establishing whether there exists $\theta\!\in\!\Theta^R$ such that $\xi(\theta)\!=\!\tau$ (which holds if the optimal value of the program is bounded). The variable $\tau$ is gradually incremented from 1. At each iteration, the objective function parameters are updated by using the kernel trick, which makes the procedure polynomial-time in $|R|$ and $n$. 

\begin{algorithm}
\caption{Compute $\min\{\texttt{deg}(\theta):\theta \in \Theta^R\}$}
\label{alg:degree}
\begin{algorithmic}%
\footnotesize
\Require set $R$ of strict pairwise preferences
\Ensure $\min\{\texttt{deg}(\theta):\theta \in \Theta^R\}$ 
\State $\tau \leftarrow 1$
\For{$(A,B)\!\in\!R$}
    \For{$(C,D)\!\in\!R$}
\Comment{{\footnotesize Initialization of dictionary $Q$}}
    \State {\footnotesize $Q[A,B,C,D] \leftarrow |A \cap C|-|A \cap D| -|B \cap C|+|B \cap D|$}
    \EndFor
\EndFor
\While{{\footnotesize $\displaystyle\max_{\alpha\ge 0} \!\!\displaystyle\sum_{(A,B)\in R}\!\!\!\alpha_{(A,B)}-\frac{1}{2}\!\!\displaystyle\sum_{(A,B)\in R}\displaystyle\sum_{(C,D)\in R}\!\!\!\alpha_{(A,B)}\alpha_{(C,D)}Q[A,B,C,D]$ is unbounded}}
\Comment{{\footnotesize the $\alpha_{(X,Y)}$'s are the variables of the convex quadratic program}}\newline
\Comment{{\footnotesize$\alpha\!\ge\! 0$ means that $\alpha_{(X,Y)}\!\ge\! 0$ for all $(X,Y)\!\in\!R$}}\newline
\Comment{{\footnotesize$Q$ contains the coefficients of the objective function, updated at each iteration}}
\State $\tau \leftarrow \tau + 1$
    \For{$(A,B)\!\in\!R$}
    \For{$(C,D)\!\in\!R$}
    \State {\footnotesize $Q[A,B,C,D] \leftarrow Q[A,B,C,D]+{|A \cap C| \choose \tau}-{|A \cap D| \choose \tau}-{|B \cap C| \choose \tau}+{|B \cap D| \choose \tau}$}
    \EndFor
    \EndFor
    \EndWhile
    \State \Return $\tau$
\end{algorithmic}
\end{algorithm}

\subsection{Computing $(\mathtt{deg}(\theta), \mathtt{card}(\theta),\mathtt{ws}(\theta))$ for $\theta\!\in\!\Theta^R_{\sqsubseteq_{\mathtt{lex}}}$\label{sec:computeThetaMin}}%

As all models $\theta\!\in\!\Theta^R_{\sqsubseteq_{\mathtt{lex}}}$ share the same vector $(\mathtt{deg}(\theta), \mathtt{card}(\theta),\mathtt{ws}(\theta))$, it is enough to compute a single model $\theta\!\in\!\Theta^R_{\sqsubseteq_{\mathtt{lex}}}$ to deduce this vector, which will be required to determine whether $A\!\succ^R_{\sqsubseteq_{\mathtt{lex}}}\!B$.
The negative complexity result (Corollary~\ref{Corol-lexNPcomplete}) regarding the computation of a model $\theta\!\in\!\Theta^R_{\sqsubseteq_{\mathtt{lex}}}$ does not prevent us from proposing an exact solution method that will prove efficient in practice. For this purpose, we first present a Linear Program (LP) allowing us to determine in polynomial time in $|R|$ and $|\theta|$ whether $\theta\!\in\!\Theta^R$, given a model $\theta$ and a set $R$ of strict pairwise preferences. From this LP, we will then develop a MIP formulation for computing $\theta\!\in\!\Theta^R_{\sqsubseteq_{\mathtt{lex}}}$.

For a given set $R$ of strict pairwise preferences and a given model $\theta$, checking whether $\theta\!\in\!\Theta^R$ can be evaluated in polynomial time in $|R|$ and in $|\theta|$ by solving the following linear program $\mathcal{P}_{\theta}$, where there is one variable $e_{(A,B)}\!\ge\!0$ for each pair $(A,B)$ in $R$: 
\begin{equation*} \label{eq : primal}
    (\mathcal{P}^R_{\theta}) \left\{ 
\begin{array}{lr}
    \min \displaystyle\sum_{(A,B) \in R } e_{(A,B)} &\\
    \displaystyle\sum_{S \in \theta} (I_A(S) - I_B(S)) v_S + e_{(A,B)} \ge 1 &\forall (A,B) \in R,\\
    e_{(A,B)} \ge 0 &\forall (A,B) \in R,\\
    v_S \in \mathbb{R} &\forall S \in \theta.
\end{array}    
     \right. 
\end{equation*}
We have that $\theta\!\in\!\Theta^R$ if and only if the optimal value of $\mathcal{P}^R_{\theta}$ is 0, because 
in this case we can find values for variables $v_S$ that respect all the preferences in $R$ without the help of the additional slack variables $e_{(A,B)}$.

We now show how to derive, from $\mathcal{P}^R_{\theta}$, a MIP formulation for computing a model $\theta\!\in\!\Theta^R_{\sqsubseteq_\mathtt{lex}}$. For this, we first compute $\mathtt{deg}(R)\!=\!\min\{\mathtt{deg}(\theta):\theta\!\in\!\Theta^R\}$, by using Algorithm~\ref{alg:degree}. We then add a  binary variable $b_S$ for each $S\!\in\! [\mathcal{F}]^{\mathtt{deg}(\theta)}$, as well as big-M constraints to ensure that $b_S\!=\!1$ iff $S\!\in\!\theta$ (i.e., $S$ belongs to the model $\theta\!\in\!\Theta^R_{\sqsubseteq_\mathtt{lex}}$).
Determining a model $\theta\!\in\!\Theta^{R}_{\sqsubseteq_{\mathtt{lex}}}$ can be done by solving the following lexicographic optimization problem: %
\begin{equation*} 
    (\mathcal{P}^R_{\sqsubseteq_\mathtt{lex}}) \left\{ 
\begin{array}{lrr}
\min\displaystyle\lex \sum_{S \in [\mathcal{F}]^{\mathtt{deg}(R)}} b_S,\, \sum_{S\in[\mathcal{F}]^{\mathtt{deg}(R)}} b_S |S| & \notag \\
    \displaystyle\sum_{S \in [\mathcal{F}]^{\mathtt{deg}(R)}} (I_A(S) - I_B(S)) v_S \ge 1 &\forall (A,B) \in R, & \constraintlabel{1}\\
    -b_SM \le v_S \le b_SM & \forall S \in [\mathcal{F}]^{\mathtt{deg}(R)}, &\\
    b_S \in \{0,1\} & \forall S \in [\mathcal{F}]^{\mathtt{deg}(R)}. &
\end{array}
\right.
\end{equation*}
\noindent where $M\!=\!(2\sum_{i=1}^{\mathtt{deg}(R)}{n \choose i} + |R|)\times (|R|)^{2|R|+2}$, so that if the values $v_S$ can be set to satisfy constraints 1, then there exist such values in the interval $[-M,M]$~(see \cite{papadimitriou1981complexity}).
Every feasible instantiation of variables $v_S,b_S$ in $\mathcal{P}^R_{\sqsubseteq_\mathtt{lex}}$ corresponds to an element $\theta\!\in\!\Theta^R$, namely $\theta\!=\!\{S\!\in\![\mathcal{F}]^{\mathtt{deg}(R)}:b_S\!=\!1\}$. Lexicographic optimization amounts to determine, among feasible instantiations of $v_S,b_S$ that minimize the first objective $\sum_{S \in [\mathcal{F}]^{\mathtt{deg}(R)}} b_S$, one that minimizes the second objective $\sum_{S\in[\mathcal{F}]^{\mathtt{deg}(R)}} b_S |S|$. It is well-known that this can be achieved as follows, using a mixed integer programming solver: 
\begin{itemize}[noitemsep,topsep=2pt]
    \item first, we solve the MIP $\mathcal{P}_1$ obtained by replacing the lexicographic objective function in $\mathcal{P}^R_{\sqsubseteq_\mathtt{lex}}$ by $\min\sum_{S \in [\mathcal{F}]^{\mathtt{deg}(R)}} b_S$; 
    \item denoting by $\opt_1$ the optimal value of $\mathcal{P}_1$, we then solve the MIP $\mathcal{P}_2$ where the objective function in $\mathcal{P}^R_{\sqsubseteq_\mathtt{lex}}$ is replaced by $\min\sum_{S\in[\mathcal{F}]^{\mathtt{deg}(R)}} b_S |S|$, under the additional constraint $\sum_{S \in [\mathcal{F}]^{\mathtt{deg}(R)}} b_S \le \opt_1$.
\end{itemize}
As every feasible instantiation corresponds to a model $\theta$ of minimal degree $\mathtt{deg}(\theta)$ (i.e., $\mathtt{deg}(\theta)\!=\!\mathtt{deg}(R)$), we thus obtain a model $\theta\!\in\!\Theta^R_{\sqsubseteq_{\mathtt{\mathtt{lex}}}}$, from which we deduce $(\mathtt{deg}(\theta), \mathtt{card}(\theta),\mathtt{ws}(\theta))$ for $\theta\!\in\!\Theta^R_{\sqsubseteq_{\mathtt{lex}}}$. In the following, we denote by $(\mathtt{deg}_\mathtt{lex}, \mathtt{card}_\mathtt{lex},\mathtt{ws}_\mathtt{lex})$ the vector $(\mathtt{deg}(\theta), \mathtt{card}(\theta),\mathtt{ws}(\theta))$ for $\theta\!\in\!\Theta^R_{\sqsubseteq_{\mathtt{lex}}}$.

\subsection{Determining whether $A\!\succ^R_{\sqsubseteq_{\mathtt{lex}}}\!B$ \label{sec:DeterminingRobustOrdinalDominance}}

Determining whether $A\!\succ^R_{\sqsubseteq_{\mathtt{lex}}}\!B$ amounts to solve:
\begin{equation*} 
(\mathcal{P}_{A\succ^R_{\sqsubseteq_{\mathtt{lex}}}B}) \left\{ 
\begin{array}{lrr}
\min\displaystyle  \displaystyle\sum_{S \in [\mathcal{F}]^{\mathtt{deg}(R)}} b_S|S| & \notag \\
\displaystyle\sum_{S \in [\mathcal{F}]^{\mathtt{deg}(R)}} b_S \leq \mathtt{card}_\mathtt{lex}, & & (2)\\
    \displaystyle\sum_{S \in
    [\mathcal{F}]^{\mathtt{deg}(R)}} (I_B(S) - I_A(S)) v_S \ge 0, & & (3)\\
    \displaystyle\sum_{S \in [\mathcal{F}]^{\mathtt{deg}(R)}} (I_X(S) - I_Y(S)) v_S \ge 1 &\forall (X,Y) \in R, & (4)\\ 
    -b_SM \le v_S \le b_SM & \forall S \in [\mathcal{F}]^{\mathtt{deg}(R)}, &\\
    b_S \in \{0,1\} & \forall S \in [\mathcal{F}]^{\mathtt{deg}(R)}. &
\end{array}
\right.
\end{equation*}

A feasible solution of $\mathcal{P}_{A\succ^R_{\sqsubseteq_{\mathtt{lex}}}B}$ yields a model $\theta$ satisfying $\mathtt{deg}(\theta)\!=\!\mathtt{deg}_{\mathtt{lex}}$ (variables $b_S$ are only defined for $S\!\in\![\mathcal{F}]^{\mathtt{deg}(R)}$) and $\mathtt{card}(\theta)\!=\!\mathtt{card}_{\mathtt{lex}}$ (by constraint 2 on the value of $\mathtt{card}(\theta)$). Furthermore, constraint 3 ensures that $\theta\!\in\!\Theta^R_{B\succsim A}$, while constraint 4 ensures that $\theta\!\in\!\Theta^R$. 
If the optimal value of $\mathcal{P}_{A\succ^R_{\sqsubseteq_{\mathtt{lex}}}B}$ is $\mathtt{ws}_{\mathtt{lex}}$, then the corresponding model $\theta$ belongs to $\theta\!\in\!\Theta^{R}_{\sqsubseteq_{\mathtt{lex}}}$ (because then $(\mathtt{deg}(\theta), \mathtt{card}(\theta),\mathtt{ws}(\theta))\!=\!(\mathtt{deg}_\mathtt{lex}, \mathtt{card}_\mathtt{lex},\mathtt{ws}_\mathtt{lex})$), and thus there exists $\theta\!\in\!\Theta^{R}_{\sqsubseteq_{\mathtt{lex}}} \cap \Theta^R_{B\succsim A}$. Consequently:
\begin{itemize}[noitemsep,topsep=2pt]
    \item if the optimal value is strictly greater than $\mathtt{ws}_{\mathtt{lex}}$, or the polyhedron is empty, then $\Theta^{R}_{\sqsubseteq_{\mathtt{lex}}} \cap \Theta^R_{B\succsim A}\!=\!\emptyset$ and hence $A\!\succ^R_{\sqsubseteq_{\mathtt{lex}}}\!B$ (by Proposition~\ref{propROD});
    \item if the optimal value of $\mathcal{P}_{A\succ^R_{\sqsubseteq_{\mathtt{lex}}}B}$ is $\mathtt{ws}_{\mathtt{lex}}$, then $A\!\not\succ^R_{\sqsubseteq_{\mathtt{lex}}}\!B$.   
\end{itemize}

\section{Numerical tests}\label{sec:tests}

We call hereafter ORD the learning approach consisting in computing $(\mathtt{deg}(R),\mathtt{card}(R),\mathtt{ws}(R))$ and using $\succ^R_{\sqsubseteq_\mathtt{lex}}$ for preference prediction.
Numerical tests were carried out on Google Colab\footnote{two virtual CPU at 2.2GHz, 13GB RAM.}, with the aim of comparing ORD with state of the art approaches in two different settings: %
\begin{itemize}[noitemsep,topsep=2pt]
\item A first set of experiments were carried out on \emph{synthetic data}, i.e., obtained by simulating a user. They aimed at evaluating our approach in an ideal setting where %
a $\theta$-additive model perfectly fits the preferences. 
\item A second set of experiments were carried out on real-world data for content-based filtering methods (more precisely, movies described by binary attributes). These tests aimed at evaluating how our approach deals with partially described alternatives (i.e., with possible ``collisions'' if two distinct alternatives share the same description), compared to other state of the art approaches.
\end{itemize}
In both sets of experiments, we start with a learning set of preferences. Based on this learning set, pairwise preference predictions are then requested on random pairs of alternatives (pairs not in the learning set). As said earlier, the model may not make a prediction if it is not robust enough given the available preference data (i.e., if there is no robust ordinal dominance). 

\subsection{The synthetic and real-world datasets}

The dataset consists of ratings assigned by a user (DM) on a set $\mathcal{A}$ of $N$ alternatives. Given a set $\mathcal{F}\!=\!\{a_1,\ldots,a_n\}$ of binary features, a learning set $\mathcal{A}_{train}$ consists of $k\!\le\!N$ ratings of alternatives in $\mathcal{A}$, where each alternative $A_i$ ($i\!=\!1,\ldots,N$) is described by a binary vector $\overrightarrow{A}_i\!=\!(A_i^1, \ldots A_i^n)$, with $A_i^j\!=\!1$ if $a_j\!\in\!A_i$, and $A_i^j\!=\!0$ otherwise. The user rating of $A_i$ is denoted by $r_i$. \begin{versiona}
Denoting by $D$ the binary matrix whose rows are the line-vectors $\overrightarrow{A}_i$, the learning set formally looks as follows:
\begin{equation*}
    D = \begin{bmatrix} 
    A_{1}^1 & \dots  & A_{1}^n\\
    \vdots & \ddots & \vdots\\
    A_{k}^1 & \dots  & A_k^n 
    \end{bmatrix} \qquad
    r = \begin{bmatrix} 
    r_1 \\
    \vdots \\
    r_k 
    \end{bmatrix}
    \end{equation*}
\end{versiona}
The set of known strict preferences is $R\!=\!\{(A_i,A_j)\!\in\!\mathcal{A}_{train}^2:r_i>r_j\}$.

\begin{versiona}
\begin{example} \label{ex:dataset}
An example of a dataset of $k\!=\!4$ alternatives $A_1\!=\!\{a_1,a_3\}$, $A_2\!=\!\{a_2,a_3\}$, $A_3\!=\!\{a_3\}$, $A_4\!=\!\{a_1,a_2,a_3\}$, described on the feature set $\{a_1,a_2,a_3\}$:
    \begin{equation*}
    D = \begin{bmatrix} 
    1 & 0 & 1 \\
    0 & 1 & 1  \\
    0 & 0 & 1  \\
    1 & 1 & 1 
    \end{bmatrix} \qquad
    r = \begin{bmatrix} 
    3 \\
    3 \\
    2 \\
    1
    \end{bmatrix}
    \end{equation*}
The corresponding set $R$ is $\{(A_1,A_3),(A_1,A_4),$ $(A_2, A_3), (A_2,A_4), (A_3, A_4)\}$.    
\end{example} 
\end{versiona}

The \emph{real-world data} consist of ratings of movies by users picked up from the IMDb dataset\footnote{\url{www.kaggle.com/datasets/gauravduttakiit/imdb-recommendation-engine}.}. This is a dataset of movie reviews that contains over 50k reviews. Each movie $A_i$ is described by a set of binary features $A_i^j$, and the ratings $r_i$ are integer values ranging from 1 to 10. The experiments were conducted with a dataset of 50 users (randomly sampled) who each rated at least $k\!=\!100$ movies. Each movie is described using a subset of $n\!=\!8$ binary features (corresponding to the main genres of the movie, e.g., ``adventure'', ``animation'', ``children'', ``comedy'', ``fantasy'', etc.). %

The \emph{synthetic data} are generated in two steps: first a $\theta$-additive function $f_{\theta, v}$ is randomly sampled, then a rating function is inferred from $f_{\theta, v}$. The procedure is precisely detailed in the following two paragraphs.

\paragraph{Sampling  a $\theta$-additive function $f_{\theta, v}$} %
For sampling a function $f_{\theta, v}$, we first sample a set $\theta$ and then we sample parameters $v_S$ for $S\!\in\!\theta$. %
More precisely, the generation of $\theta$ is achieved as follows. First, $\theta$ is initialised as the set of singletons $\{a_1\},\{a_2\},\ldots,\{a_n\}$, then we add $\lfloor \alpha \times (2^{n} - n) \rfloor$ subsets of attributes, where the coefficient $\alpha\!\in\![0,1]$ makes it possible to control the model's complexity: for $\alpha\!=\!0$, only the singletons are in $\theta$, which yields the simple additive utility model, and for $\alpha\!=\!1$, all subsets of attributes are present, wich yields the most general utility model. %
Each subset $S$ is sampled according to a parameter $p\!\in\!(0,1]$:
\begin{enumerate}[noitemsep,topsep=2pt]
    \item{Initialize $S$ as a singleton by uniformly sampling in $\mathcal{F}$.}
    \item{Uniformly sample another attribute in $\mathcal{F}$ and add it to $S$.}
    \item{Exit this process if $
    S\!=\!\mathcal{F}$.}
    \item{Exit this process with a probability $p$ otherwise go to 2.}
\end{enumerate}
The expected size of sets $S$ we sample is $\mathbb{E}[|S|] = 2 + (1-p-(1-p)^{n-1})/p$.
\begin{versiona}
Table~\ref{tab:sizeS} gives some hints on the expected size of each $S$ according to $p$.
\end{versiona} 
Once $\theta$ is set, we sample the parameters $v_S$ for each $S\!\in\!\theta$ with a normal distribution $\mathcal{N}(0,\sigma)$. The sampling  of $f_{\theta,v}$ thus depends on three parameters $p$, $\alpha$ and $\sigma$. In the tests, $p$ varies in $[0.1, 0.9]$, $\alpha$ in $[0.1, 0.5]$, and $\sigma\!=\!100$.%

\begin{versiona}
\begin{table}[hbtp]   
\footnotesize
    \begin{tabular}{cccccc}
    \hline
        $p$ & 0.2 & 0.4 & 0.6 & 0.8 & 1 \\
    \hline
        $\mathbb{E}[|S|]$ & 3.95 & 3.18 & 2.62 & 2.25 & 2.00 \\
    \hline
    \end{tabular}
    \centering
\caption{\label{tab:sizeS}Expected size of subsets $S$ w.r.t. $p$, with $n\!=\!4$.}
\end{table}
\end{versiona}

\begin{versiona}
\begin{example}
\label{ex:tests}
If $n\!=\!5$, $p\!=\!0.3$, $\alpha\!= \!0.1$, then $\lfloor 0.1(2^5\!-\!5) \rfloor\!=\!2$ subsets $S$ are sampled in addition to the singletons. This may yield the parameter values given in Table~\ref{tab:ex_parameter_values}.
\begin{table}[hbtp]%
\footnotesize
\begin{tabular}{|c|l|l|l|l|l|l|l|}
\hline
$S$ & $\{a_1\}$ & $\{a_2\}$ & $\{a_3\}$ & $\{a_4\}$ & $\{a_5\}$ & $\{a_1,a_3\}$ & $\{a_1,a_4,a_5\}$  \\ \hline
$v(S)$ & 148.85 & 186.75 & 90.60 & -86.12 & 191.00 &  80.24 & -26.80 \\\hline
\end{tabular}%
\centering
\caption{\label{tab:ex_parameter_values}Example of parameter values sampled with $n=4$, $p=0.3$, and $\alpha= 0.1$.}
\end{table}   
\end{example}
\end{versiona}

\paragraph{From $f_{\theta,v}$ to a rating function} %
A function $\TierFunction \!:\! \AlternativeSet\!\rightarrow\!\{1,\ldots,t\}$ simulates the ratings of the user (of which only a subset of examples $\TierFunction(A_i)\!=\!r_i$, for $i\!\in\!\{1,\ldots,k\}$, is known to the model). The definition of $\TierFunction$ from $f_{\theta,v}$ depends on a parameter $t$ defining the domain $\{1,\ldots,t\}$ of possible ratings. The range of scores $f_{\theta,v}(A)\!=\!\sum_{S \in \theta} v_S I_A(S)$ of alternatives $A$ is partitioned into $t$ equally-sized intervals $(v_{k-1},v_k]$ between the min score $v_0\!=\!\min_{A \in \AlternativeSet} f_{\theta,v}(A)$ and the max score $v_{t}\!=\!\max_{A \in \AlternativeSet} f_{\theta,v}(A)$.
The function $\TierFunction$ is then: 
\begin{equation*}
    \TierFunction(A) = \min\{1\le k\le t : f_{\theta,v}(A) \leq v_k \}.
\end{equation*}
Put another way, the rating of $A$ corresponds to the index $k$ of the interval $(v_{k-1},v_k]$ in which $f_{\theta,v}(A)$ lies. %
In general, the wider the domain of possible ratings, the fewer incomparabilities (alternatives with the same rating).%

\begin{versiona}
\begin{example}
Coming back to Example~\ref{ex:tests}, let $\AlternativeSet\!=\!\{0,1\}^n$. We have that $\max_{A \in \mathcal{A}} f_{\theta, v}(A)\!=\!616.41$ and  $\min_{A \in \mathcal{A}} f_{\theta, v}(A)\!=\!-86.12$. Assume that one allows $t=3$ possible rating values. The intervals are then
$[-86.12, 148.05],$ $(148.05,382.23]$ and $(382.23,616.41]$. The alternative $A\!=\!\{a_2,a_3,a_4\}$ receives rating $2$ as $f_{\theta,v}(A)\!=\!191.23$ belongs to $(148.05,382.23]$.
\end{example}
\end{versiona}

\subsection{Baseline models}
\label{sec:baseline_models}

We briefly describe here the baseline models to which ORD is compared. Throughout the subsection, we have $\theta\!=\![\mathcal{F}]^{\mathtt{deg}(R)}$ and each alternative $A$ is described by an \emph{augmented} binary vector $\overrightarrow{A}\!=\!(I_{A}(S_1), \ldots, I_{A}(S_{|\theta|}))$, where $S_1,\ldots,S_{|\theta|}$ are the subsets of $\mathcal{F}$ of size less than or equal to $\mathtt{deg}(R)$.

\paragraph{Linear Regression (LR)} We consider the $\theta$-additive model, and we use linear regression to determine the value function $\hat{v}$ such that $f_{\theta,\hat{v}}$ best approximates the utility function $f$, by minimizing $\sum_{i=1}^k (\overrightarrow{A}_i\overrightarrow{v}^T\!-\! \textrm{\small normalized}(r_i))^2$, where $\textrm{\small normalized}(r_i)\!=\!\frac{r_i-\min_i r_i}{\max_i r_i-\min_i r_i}$ (note that $\overrightarrow{A}_i\overrightarrow{v}^T\!=\!f_{\theta,v}(A_i)$). Put another way, we use the least squares method\footnote{Precisely the \texttt{LinearRegression} function from the \texttt{scikit-learn} python library.} with ratings normalized in $[0,1]$. %
\begin{versiona}
\begin{equation*} %
    D = \begin{bmatrix} 
    I_{A_{1}}(S_1) & \dots  & I_{A_{1}}(S_\tau)\\
    \vdots & \ddots & \vdots\\
    I_{A_{k}}(S_1) & \dots  & I_{A_{k}}(S_\tau)
    \end{bmatrix} \qquad
    r = \begin{bmatrix} 
    \textrm{\small normalized}(r_1)\\
    \vdots \\
    \textrm{\small normalized}(r_k)
    \end{bmatrix}
\end{equation*}
\end{versiona}
We predict $A\!\succ\!B$ if $f_{\theta,\hat{v}}(\overrightarrow{A})\!>\! f_{\theta,\hat{v}}(\overrightarrow{B})$.

\begin{versiona}
\begin{example} Coming back to Example~\ref{ex:dataset}, the linear regression is performed on the following input data:
     \begin{equation*}
    D = \begin{bmatrix} 
    1 & 0 & 1 & 0 & 1 & 0\\
    0 & 1 & 1 & 0 & 0 & 1\\
    0 & 0 & 1 & 0 & 0 & 0\\
    1 & 1 & 1 & 1 & 1 & 1  
    \end{bmatrix} \qquad
    r = \begin{bmatrix} 
    1 \\
    1 \\
    0.5 \\
    0
    \end{bmatrix}
    \end{equation*}
    where the columns correspond to $S\!=\!\{a_1\},\{a_2\},\{a_3\},\{a_1,a_2\},\{a_1,a_3\},\{a_2,a_3\}$ in this order.
\end{example}
\end{versiona}

\paragraph{Support Vector Machine (SVM)} This baseline model is inspired by the approach proposed by \citeauthor{domshlak2005unstructuring} \cite{domshlak2005unstructuring}.
An SVM approach is a supervised learning method for binary classification: each example in the dataset is labeled by 0 or 1; an SVM is learned from the dataset\footnote{We use the \texttt{SVC} function from the \texttt{scikit-learn} python library.}, from which labels are inferred for new examples. In our setting, each preference $A\!\succ\!B$ in $R$ yields two examples: a $(|\theta|\!+\!1)$-dimensional vector $(\overrightarrow{A}\!-\!\overrightarrow{B}, 1)$ and another vector $(\overrightarrow{B}-\overrightarrow{A}, 0)$.
That is, the third component of $(\overrightarrow{A}\!-\!\overrightarrow{B}, c)$ is $c\!=\!1$ if $A$ is preferred to $B$, and $c\!=\!0$ if it is not. For predicting the preference between two alternatives $A$ and $B$, we infer the labels of $(\overrightarrow{A}\!-\!\overrightarrow{B})$ and $(\overrightarrow{B}\!-\!\overrightarrow{A})$ by using the SVM. If the label of $(\overrightarrow{A}\!-\!\overrightarrow{B})$ is 1 (resp. 0) and that of $(\overrightarrow{B}\!-\!\overrightarrow{A})$ is 0 (resp. 1), then we predict $A\!\succ\!B$ (resp. $B\!\succ\!A$). %

\begin{versiona}
\begin{example} Coming back to Example~\ref{ex:dataset}, the SVM approach is performed on the following input data:
     \begin{equation*} \label{eq:input_data_SVM}
    D = \begin{bmatrix} 
    1 & 0 & 0 & 0 & 1 & 0\\
    -1 & 0 & 0 & 0 & -1 & 0\\
    0 & -1 & 0 & -1 & 0 & -1\\
    0 & 1 & 0 & 1 & 0 & 1\\
    \vdots & \vdots & \vdots &  \vdots & \vdots & \vdots
    \end{bmatrix} \qquad
    y = \begin{bmatrix} 
    1 \\
    0 \\
    1 \\
    0 \\
    \vdots
    \end{bmatrix}
    \end{equation*}
    where the rows correspond to $\overrightarrow{A}_1\!-\!\overrightarrow{A}_3,\,\overrightarrow{A}_3\!-\!\overrightarrow{A}_1,\,\overrightarrow{A}_1\!-\!\overrightarrow{A}_4,\,\overrightarrow{A}_4\!-\!\overrightarrow{A}_1,\ldots$ in this order.
\end{example}
\end{versiona}

\paragraph{K-Nearest Neighbours (KNN)}
The distance-based models are widely used in the context of recommender systems. The distance-based model we consider is implemented as follows. 
The predicted rating of an alternative $A$ is obtained by making a weighted sum $\sum_{i=1}^K w_i r_i$ of the ratings $r_1,\ldots,r_K$ of its $K$ nearest neighbours $A_1,\ldots,A_K$ in the learning set\footnote{We use the \texttt{ KNeighborsClassifier} function from the \texttt{scikit-learn} python library.}, with each weight $w_i$ proportional to the Euclidean distance of the neighbour $\overrightarrow{A}_i$ to $\overrightarrow{A}$.
The value of $K$ was set to $K\!=\!5$ in our experiments, after preliminary tests showing this was the value yielding the best results for the dataset considered here. For predicting the preference between two alternatives $A$ and $B$, we compute the predicted ratings of them, and predict the preference accordingly.

\subsection{Experimental setup}
\label{sec:exp_setup}

In all experiments, the dataset
is a set $\mathcal{A}$ of $N$ alternatives, described by a set $\mathcal{F}$ of $n$ binary features, and an associated rating vector $r$ (integer values). %
The rating $r(A)$ of each alternative $A\!\in\!\mathcal{A}$ is known.
To compare the performances of the different learning methods, we extract a subset $\mathcal{A}_{train}$ of $k$ alternatives from $\mathcal{A}$, on which the models are trained. 
The alternatives in $\mathcal{A}_{train}$ are chosen uniformly at random.
We then randomly sample 100 pairs $\{A,B\}$ in $\mathcal{A}$ such that $A\!\not\in\!\mathcal{A}_{train}$ or $B\!\not\in\!\mathcal{A}_{train}$ (possibly neither $A$ nor $B$ belongs to $\mathcal{A}_{train}$), and we compare the predicted pairwise preference with the actual preference: $A\!\succ\!B$ if $r(A)\!>\!r(B)$, $B\!\succ\!A$ if $r(B)\!>\!r(A)$, $A\!\sim\!B$ (incomparability) if $r(A)\!=\!r(B)$. The extraction of a subset $\mathcal{A}_{train}$ from $\mathcal{A}$, the training of each model and the (100) pairwise preference predictions are performed 10 times, and the prediction performances are averaged over the 10 runs. We detail below the parameters that are used for the experiments on synthetic data and for the experiments on real-world data. %

\paragraph{Synthetic data} The experiments on synthetic data were conducted with $|\mathcal{F}|\!=\!8$ binary features, which yields a set $\mathcal{A}$ of $2^{|\mathcal{F}|}\!=\!256$ alternatives, a scale of $t\!=\!12$ possible ratings, and the set of parameters $(\alpha,p,\sigma)\!=\!(0.1,0.9,100)$ for the generation of $f_{\theta,v}$. This set of parameters yields functions $f_{\theta,v}$ that are usually up to 4-additive, with an average $|\theta|$ equal to 12. This setting is not really restrictive as, given the number of strict pairwise preferences in $R$ that are considered in our experiments (i.e., $|R|\!\leq\!{|\mathcal{A}_{train}| \choose 2}$), it is unlikely that $R$ cannot be represented by using a function $f_{\theta,v}$ of degree up to 4. 
The size of $\mathcal{A}_{train}$ indeed varies between 12 and 29, from which between $|R|\!=\!{12 \choose 2}\!=\!66$ and ${29 \choose 2}\!=\!400$ pairwise preferences can be inferred.

\paragraph{Real-world data} 
For each of the 50 users that have rated at least 100 movies, a dataset $\mathcal{A}$ including between 45 and 100 alternatives is first extracted. A training set $\mathcal{A}_{train}$ is then extracted from $\mathcal{A}$, with $|\mathcal{A}_{train}|$ corresponding to 90\% of $|\mathcal{A}|$ (which is common practice in machine learning, in particular for performing 10-fold cross-validation). The size of $\mathcal{A}_{train}$ thus varies from 5 to 10, from which between $|R|\!=\!{5 \choose 2}\!=\!10$ and ${10 \choose 2}\!=\!45$ pairwise preferences can be inferred.

\subsection{Evaluation metrics}
We outline here the specific metrics that will be used to evaluate the ORD approach and compare it to other methods.
To define our metrics we consider the 9 cases that can occur in the confusion matrix defined below. 
\paragraph{Confusion Matrix}
For a given pair of alternatives $(A,B)\!\in\! \AlternativeSet^2$ each model could either infer (predicted output) that $A$ is \emph{better} than $B$ ($A \succ B$), or that $A$ is \emph{worse} than $B$ ($B \succ A$) or it could return that the relation between $A$ and $B$ is \emph{unknown}. 
Then, as outlined earlier, by comparing $r(A)$ and $r(B)$, we can have (real outputs) that $A$ is indeed better than $B$ if $r(A) > r(B)$ or that $A$ is worse than $B$ if $r(B) > r(A)$ or that the relation between them is unknown if they share the same rating (incomparability). 
Our metrics are based on the confusion matrix defined in Table~\ref{TableCases}, where the rows symbolizes the predicted outputs and the columns the real outputs. 

\begin{table}[!h]
\centering
\footnotesize
\begin{tabular}{|c|c|c|c|}
\hline
Predicted/Real & (B)etter & (W)orst & (U)nknown \\ \hline
(B)etter          & BB       & BW      & BU        \\ \hline
(W)orst           & WB       & WW      & WU        \\ \hline
(U)nknown         & UB       & UW      & UU        \\ \hline
\end{tabular}
\caption{Confusion matrix.\label{TableCases}}
\end{table}

\paragraph{Precision} The precision is defined as the ratio between the number of correct predictions among all the predictions that \emph{were} made. 
$$
P = \frac{BB + WW}{BB + WW + BW + WB + BU + WU}.
$$

\paragraph{Recall} The recall is defined as the ratio between the number of correct predictions among all the predictions that \emph{could be} made. 
$$
R = \frac{BB + WW}{BB + WW + BW + WB + UB + UW}.
$$

The precision metric penalizes the models making unreliable predictions, while the recall metric penalizes the models that avoid making predictions.

\paragraph{F1-score}
F1-score is a metric that combines precision and recall to provide a balanced evaluation of a model's performance. It is obtained by computing the harmonic mean of precision and recall:
$$
F = 2 \frac{P \times R}{ P + R}.
$$
As the F1-score captures both precision and recall, it is an ideal metric for evaluating the robustness and accuracy of the studied models. Hence, we strongly rely on it when presenting our results.

\paragraph{Prediction Correctness} This metric is similar to precision, except that it does not take into account predictions that cannot be evaluated for lack of preferential information to check whether they are correct or incorrect.
$$
PC = \frac{BB + WW}{BB + WW + BW + WB}.
$$

\paragraph{Prediction Rate} This metric does not take into account the correctness of the predictions, it simply evaluates the rate at which the model produces predictions:
$$
PR = 1- \frac{UB+UW+UU}{M},
$$
where $M$ represents all the cases of Table \ref{TableCases} ($BB + WW + BW + WB + BU + WU + UB + UW + UU$).

\subsubsection{Results on synthethic data}

\begin{figure}[!h]
    \centering
    \includegraphics[scale = 0.45]{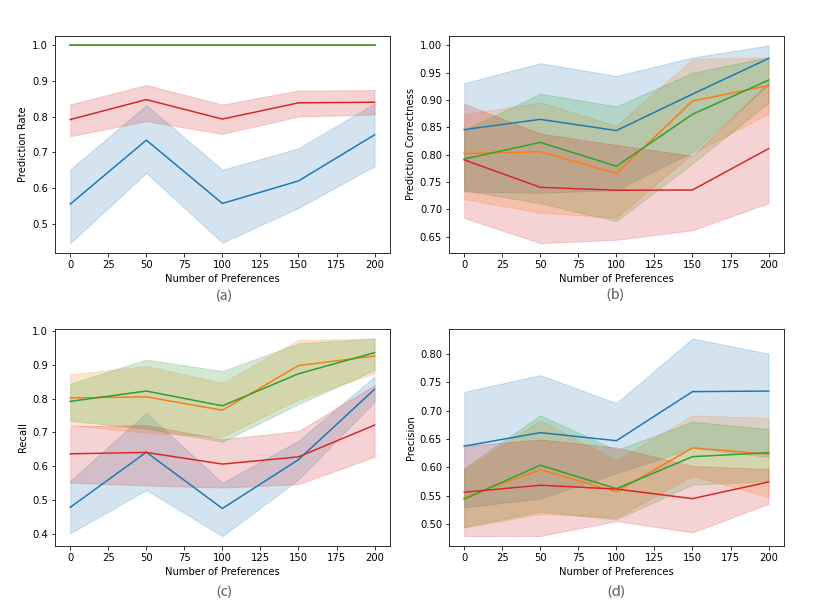}
    \caption{Precision, Recall, Prediction Rate and Prediction Correctness according to the number $|R|$ of preferences for models ORD (blue), KNN (red), SVM (orange), LR (green).}
    \label{fig:metrics_synthethic}
\end{figure}

\begin{figure}[!h]
    \centering
    \includegraphics[scale = 0.45]{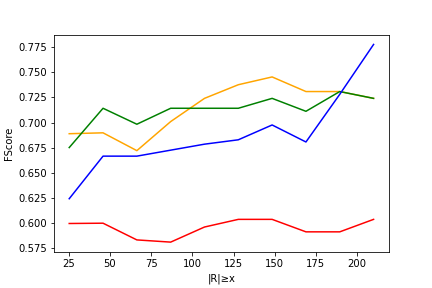}
    \caption{Average F1-score according to the threshold x on the number of preferences in $R$, for models ORD (blue), KNN (red), SVM (orange), LR (green).}
    \label{fig:fscore}
\end{figure}

The results on synthetic data are presented in Figures~\ref{fig:metrics_synthethic} and~\ref{fig:fscore}, where the x-axis gives the number of preferences in $R$ (inferred from the ratings of the alternatives in $\mathcal{A}_{train}$) and the curves show the mean and 95\% confidence interval. The curves show how the different metrics evolve with $|R|$.

Figure~\ref{fig:metrics_synthethic} shows that each approach produces a different compromise between the number of predictions and their quality. 
The LR and SVM approaches have, by design, %
a prediction rate of 1 (the orange line is covered by the green one in the figure) but with predictions that are always less accurate than the predictions made by ORD. 
Since the KNN approach averages the rates of the $K$ nearest neighbours of the instance to predict, it may occur that two alternatives obtain the exact same score (e.g., if the $K$ nearest neighbours are the same for both alternatives) and thus that no strict preference prediction is made. The prediction rate of KNN is 0.8 on average, but the curve of prediction correctness (and thus the curves of recall and precision) shows that it does not improve the accuracy of the predictions compared to the other methods, quite the contrary. 

The ORD model, in contrast, outperforms the other models in terms of precision, as illustrated by the average prediction correctness that is almost always above 0.85.
However, since the recall metric penalizes the models that do not make enough predictions, the performances of ORD are below the average performance of the other models in terms of recall. 
This behavior is, in a sense, intrinsic to an approach that prioritizes the robustness of predictions.
Nevertheless, as can be seen in Figure~\ref{fig:metrics_synthethic}c, the recall significantly improves with $|R|$.
There are a few irregularities in the curve of the prediction rate for ORD, due to the fact that $\mathtt{deg}(R)$ grows in steps with $|R|$, and this degree impacts $|\Theta^R|$ and thus the number of predictions made (the ordinal dominance relationship becoming more stringent).

While the interest of a compromise between quantity and quality of predictions inherently varies depending on the specific context of an application, the F1-score is a commonly adopted metric to navigate these trade-offs. Figure~\ref{fig:fscore} shows the average F1-score on learning instances where $|R|\!\ge\!\mathrm{x}$, in function of $\mathrm{x}$. 
We observe that the average F1-scores of ORD, LR and SVM are close for $\mathrm{x}\!=\!50$ (i.e., $R$ include at least 50 preferences). Notably, as $\mathrm{x}$ grows so that $R$ encompasses at least 170 preferences, the ORD approach demonstrates a significant performance advantage over LR and SVM.

The curves in Figure~\ref{fig:running_times_synthethic} gives the average running times of ORD (in seconds, averaged over 20 instances) according to the number $n$ of features (for $300\!\leq\!|R|\!\leq\!400$) and the number $|R|$ of known strict pairwise preferences (for $n\!=\!8$). The orange curve gives the average running time for \emph{one} pairwise preference prediction; this is the most time-consuming phase: note indeed that learning $(\mathtt{deg}(R),\mathtt{card}(R),\mathtt{ws}(R))$ is only performed once for each $R$, while 100 preference predictions are made for each $R$ in our tests.

\begin{figure}[!h]
    \centering
    \includegraphics[scale = 0.45]{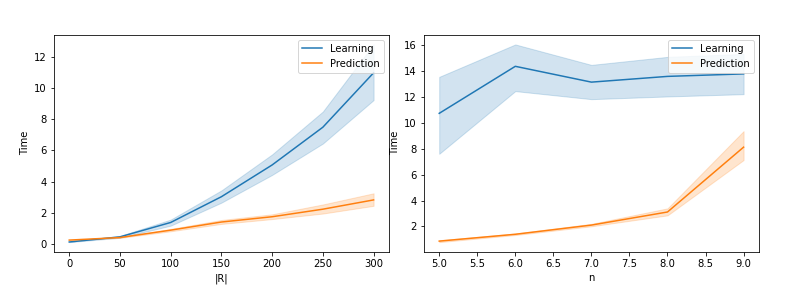}
    \caption{Running times of ORD (in seconds).}
    \label{fig:running_times_synthethic}
\end{figure}

\subsubsection{Results on real-world data}

The results obtained on real-world data from IMDb are summarized in Table~\ref{tab:real}. 
Compared to the synthetic data, the precision rates of KNN, LR and SVM significantly decrease, while the precision rate of ORD is holding up better. The recall of ORD remains lower than the recall of LR and that of SVM, but this is overcompensated by the reduced precision performance gap between ORD and LR/SVM. This allows ORD to achieve a better compromise between precision and recall, thus yielding a better F1-Score.

\begin{table}[h!]
  \centering
  \footnotesize
  \caption{Model performances averaged on all the users}
  \begin{tabular}{lrrrr}
    \toprule
    Model & Prediction Rate & Precision & Recall & F1-Score \\
    \midrule
    KNN & 0.82 & 0.48 & 0.65 & 0.59  \\
    LR & 1  & 0.55 & 0.90 & 0.69  \\
    ORD & 0.60 & 0.76 & 0.83 & 0.81  \\
    SVM & 1 & 0.55 & 0.92 & 0.70  \\
    \bottomrule
  \end{tabular}
  \label{tab:real}
\end{table}

\section{Conclusion}

We have presented here a robust ordinal method for subsets comparisons with interactions. The model we use is not restrictive, in the sense that any strict weak order on subsets can be represented. The learning method achieves a trade-off between the number of predicted preferences and the accuracy of the predictions, by relying on a robust ordinal dominance relation between subsets.

Several research directions are worth investigating, among which the adaptation of the approach to an active learning setting where one interactively determines a sequence of queries to minimize the cognitive burden for the decision maker, or a better consideration of potential ``errors'' in the preferences used as a learning set.

\section*{Acknowledgements}
We acknowledge the support of the French Agence Nationale de la Recherche (ANR), under grant ANR20-CE23-0018 (project THEMIS).

 \bibliographystyle{apalike}
 \bibliography{cas-refs}

 \newpage

 \appendix

 \section{Properties of the $\theta$-ordinal dominance relation}

\begin{restatable}{proposition}{PropPropertiesOrdDom}
The following properties hold for $\succ^R_{\theta}$:
    
    $~~~$ (i) $\succ_{\theta}^R$ is asymmetric. 
    
    $~~~$ (ii) $\succ_{\theta}^R$ may not be complete.

    $~~~$ (iii) $\succ_{\theta}^R$ is not necessarily negatively-transitive. 
\end{restatable}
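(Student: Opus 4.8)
The plan is to handle the three items independently: (i) is a structural fact that follows from the nonemptiness of $V_\theta^R$ (guaranteed by the standing hypothesis $\theta\in\Theta^R$), whereas (ii) and (iii) are existential ``may not''/``not necessarily'' claims for which a single well-chosen counterexample suffices. For (ii) I would simply reuse the configuration already set up in Example~\ref{ex : intro theta model 2}, and for (iii) I would build a minimal instance containing an attribute of $\mathcal{F}$ on which $R$ imposes no constraint.

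For (i), suppose $A\succ_\theta^R B$, so that $f_{\theta,v}(A)>f_{\theta,v}(B)$ for \emph{every} $v\in V_\theta^R$. Since $\theta\in\Theta^R$ we have $V_\theta^R\neq\emptyset$, so I fix any witness $v_0\in V_\theta^R$; then $f_{\theta,v_0}(A)>f_{\theta,v_0}(B)$, which contradicts the universally quantified inequality $f_{\theta,v}(B)>f_{\theta,v}(A)$ demanded for $B\succ_\theta^R A$. Hence $B\not\succ_\theta^R A$, giving asymmetry (and, taking $A=B$, irreflexivity, as $0>0$ can never hold). The point worth stressing is that the argument collapses without a witness, which is precisely why $\theta\in\Theta^R$ is assumed.

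For (ii), I would invoke Example~\ref{ex : intro theta model 2} with $A=\{a_1,a_4\}$ and $B=\{a_2,a_3\}$: the functions $v$ and $v'$ both lie in $V_\theta^R$ yet order $A$ and $B$ oppositely, so $v'$ witnesses $\neg(A\succ_\theta^R B)$ and $v$ witnesses $\neg(B\succ_\theta^R A)$, i.e. $A\inc_\theta^R B$; thus $\succ_\theta^R$ need not be complete. For (iii), recall that negative transitivity would require $A\succ_\theta^R C\Rightarrow(A\succ_\theta^R B \text{ or } B\succ_\theta^R C)$ for all $B$; equivalently, $\neg(A\succ_\theta^R B)$ together with $\neg(B\succ_\theta^R C)$ should force $\neg(A\succ_\theta^R C)$. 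I would refute this with $\mathcal{F}=\{a_1,a_2,a_3\}$, $\AlternativeSet=2^{\mathcal{F}}$, $\theta=\{\{a_1\},\{a_2\},\{a_3\}\}$, $R=\{(\{a_1\},\{a_2\})\}$ (so $V_\theta^R\neq\emptyset$), and the triple $A=\{a_1\}$, $B=\{a_3\}$, $C=\{a_2\}$. Because $(\{a_1\},\{a_2\})\in R$ forces $v_{\{a_1\}}>v_{\{a_2\}}$ for every $v\in V_\theta^R$, we get $A\succ_\theta^R C$; but $v_{\{a_3\}}$ is unconstrained, so choosing it large (resp. small) breaks $A\succ_\theta^R B$ (resp. $B\succ_\theta^R A$), and likewise for the pair $(B,C)$. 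Hence $A\inc_\theta^R B$ and $B\inc_\theta^R C$, contradicting negative transitivity.

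The sign bookkeeping is routine; the only genuine design choice — and therefore the main obstacle — lies in (iii), namely engineering a triple in which strict dominance survives between the two extremes while incomparability is \emph{forced} at the intermediate element. Introducing an attribute on which $R$ says nothing is the clean device that decouples the middle comparison from the forced one, and in each item I would finish by checking that the claimed witness functions indeed belong to $V_\theta^R$.
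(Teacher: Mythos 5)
Your proposal is correct and follows essentially the same route as the paper's proof: asymmetry via a witness $v_0\in V_\theta^R$ (nonempty since $\theta\in\Theta^R$), incompleteness via Example~\ref{ex : intro theta model 2}, and failure of negative transitivity via a three-singleton instance with one constrained pair and one unconstrained ``middle'' attribute. Your counterexample in (iii) is the paper's own construction up to a relabeling of the attributes, and your explicit remark that nonemptiness of $V_\theta^R$ is what makes (i) non-vacuous is a welcome clarification of a point the paper leaves implicit.
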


\begin{proof}
    $(i)$ $A \succ_\theta^R B \Rightarrow \forall v \in V_\theta^R,\,f_{\theta, v}(A) > f_{\theta, v}(A)$. Thus there is no function $v'\!\in\!V^R_\theta$ such that $f_{\theta, v'}(A) < f_{\theta, v'}(A)$. %
    
    $(ii)$ As shown in Example~\ref{ex : intro theta model 2}, we may have $v, v' \in V_\theta^R$ such that $f_{\theta, v}(A) > f_{\theta, v}(B)$ and $f_{\theta, v'}(B) > f_{\theta, v'}(A)$. We have then neither $A \succ_\theta^R B$ nor $B \succ_\theta^R A$, and thus $\succ_\theta^R$ may not be complete.

    $(iii)$ Let $\mathcal{F} = \{a_1,a_2,a_3\}$, $R = \{ (\{a_1\},\{a_3\})\}$, $\theta = \{\{a_1\},\{a_2\},\{a_3\}\}$ and $v, v'$ two value functions defined as follows: 
    \begin{align*}
        v(\{a_1\}) = 2 &, v(\{a_2\}) = 3, v(\{a_3\}) = 1, \\
        v'(\{a_1\}) = 3 &, v'(\{a_2\}) = 1, v'(\{a_3\}) = 2.
    \end{align*}
    We have that $v, v'\!\in\!V_\theta^R$ as $f_{\theta, v}(\{a_1\})\!>\!f_{\theta, v}(\{a_3\}) $ and $f_{\theta, v'}(\{a_1\})\!>\!f_{\theta, v'}(\{a_3\}) $.\\
    It follows from $f_{\theta, v}(\{a_2\}) > f_{\theta, v}(\{a_1\})$ that $\neg (\{a_1\} \succ_\theta^R \{a_2\})$. \\
    It follows from $f_{\theta, v'}(\{a_3\}) > f_{\theta, v'}(\{a_2\})$ that $\neg (\{a_2\} \succ_\theta^R \{a_3\})$. \\
    Yet $\{a_1\} \succ_\theta^R \{a_3\}$ by definition of $R$.
\end{proof}

\begin{restatable}{proposition}{propChangeTheta} \label{propChangeTheta}
Given a set $R$ of strict pairwise comparisons, and $\theta\!\in\!\Theta^R$, if $R'\!\subseteq\!R$ then: (i) $\theta \!\in\!\Theta^{R'}$; (ii) $A \!\succ_{\theta}^{R'}
\!B\!\Rightarrow\!A\!\succ_{\theta}^{R}\!B$; (iii) $A\!\succ_{\theta}^{R}\!B\!\Rightarrow\! \neg (B \succ_{\theta}^{R'} A)$.
\end{restatable}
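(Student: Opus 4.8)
The plan is to observe that all three claims flow from a single monotonicity fact: restricting the preference set shrinks the list of linear constraints defining the uncertainty polyhedron, and hence \emph{enlarges} the set of compatible value functions. Concretely, since $V_\theta^R$ is defined by one strict inequality $f_{\theta,v}(A)>f_{\theta,v}(B)$ per pair $(A,B)\in R$, passing from $R$ to a subset $R'\subseteq R$ simply drops some of these inequalities. I would first establish the inclusion
$$
R'\subseteq R \;\Longrightarrow\; V_\theta^R \subseteq V_\theta^{R'},
$$
by noting that any $v$ satisfying all constraints indexed by $R$ satisfies in particular the subset indexed by $R'$. This one containment is the workhorse for everything that follows.

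For claim (i), I would invoke $\theta\in\Theta^R$, which by definition means $V_\theta^R\neq\emptyset$; the inclusion above then gives $V_\theta^{R'}\supseteq V_\theta^R\neq\emptyset$, so $\theta\in\Theta^{R'}$. For claim (ii), I would unfold the definition of $\succ_\theta^{R'}$: the hypothesis $A\succ_\theta^{R'}B$ says $f_{\theta,v}(A)>f_{\theta,v}(B)$ holds for \emph{all} $v\in V_\theta^{R'}$, and since $V_\theta^R\subseteq V_\theta^{R'}$ this holds \emph{a fortiori} for all $v\in V_\theta^R$, which is exactly $A\succ_\theta^R B$.

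Claim (iii) is the only one requiring a slightly different argument, and I would handle it by contradiction together with a nonemptiness remark. From $\theta\in\Theta^R$ we may pick some $v\in V_\theta^R$; the hypothesis $A\succ_\theta^R B$ forces $f_{\theta,v}(A)>f_{\theta,v}(B)$ for this $v$. But $v$ also lies in $V_\theta^{R'}$ by the inclusion, so if $B\succ_\theta^{R'}A$ held we would get $f_{\theta,v}(B)>f_{\theta,v}(A)$ for the same $v$, a contradiction; hence $\neg(B\succ_\theta^{R'}A)$.

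I do not anticipate a genuine obstacle here, as each part is a direct consequence of the feasible-set inclusion. The one point I would be careful to state explicitly is the \emph{nonemptiness} of $V_\theta^R$ used in (iii): without a witness $v$, the strict-dominance hypothesis on $R$ would be vacuous and could not rule out the reverse dominance on the larger feasible set $V_\theta^{R'}$. This is precisely why the assumption $\theta\in\Theta^R$ is needed rather than merely $\theta\in\Theta^{R'}$.
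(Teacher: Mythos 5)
Your proposal is correct and follows essentially the same route as the paper: the single inclusion $V_\theta^R \subseteq V_\theta^{R'}$ drives all three parts, with (iii) obtained from (ii) together with the impossibility of both strict dominances holding at once. Your explicit remark that (iii) needs a witness $v \in V_\theta^R$ (i.e., the hypothesis $\theta \in \Theta^R$, without which the dominance statements are vacuous) is exactly the nonemptiness that the paper's appeal to asymmetry of $\succ_\theta^R$ implicitly relies on, so it is a welcome clarification rather than a divergence.
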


\begin{proof} 
$(i)$ If all the preferences in $R$ can be represented by a $\theta$-additive function, then so can the preferences in $R'$ as $R'$ is compounded of a subset of the preferences in $R$. 

$(ii)$ If the preferences in $R'$ imply that $A$ should be necessarily strictly preferred to $B$, then $R$ will imply the same conclusion as $\Theta^R\!\subseteq\!\Theta^{R'}$ (because $R$ contains all the preference constraints in $R'$, along with additional constraints). 

$(iii)$ The contrapositive is proved as follows: $B\!\succ_\theta^{R'}\!A\!\Rightarrow\!B\succ_\theta^R\!A$ by $(ii)$, and $B\succ_\theta^R\!A\!\Rightarrow\!\neg(A\succ_\theta^R\!B)$ because strict preferences are asymmetrical. 
\end{proof}

\begin{restatable}{proposition}{propChangeThetaBis}\label{propChangeThetaBis}
Let $\theta,\theta'\!\in\!\Theta^R$. If $\theta'\!\subseteq\!\theta$, then the following assertions hold:

(i) $A\!\succ_{\theta}^R\!B\!\Rightarrow\!A \!\succ_{\theta'}^R\!B$; (ii) $A \!\inc_{\theta'}^R\!B\!\Rightarrow\!A\! \inc_{\theta}^R\!B$; (iii) $A\! \succ_{\theta'}^R\!B\!\Rightarrow\!\neg (B \!\succ_{\theta}^R\!A)$.
\end{restatable}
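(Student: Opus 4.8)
The plan is to derive all three implications from a single structural fact: a utility-preserving embedding of $V_{\theta'}^R$ into $V_\theta^R$. First I would record this embedding. Since $\theta'\subseteq\theta$, any value function $v'\colon\theta'\to\mathbb{R}$ extends to a function $v^\star\colon\theta\to\mathbb{R}$ by setting $v^\star_S = v'_S$ for $S\in\theta'$ and $v^\star_S = 0$ for $S\in\theta\setminus\theta'$. For every alternative $C\in\AlternativeSet$ the added coordinates contribute nothing, so $f_{\theta,v^\star}(C) = \sum_{S\in\theta} I_C(S)\,v^\star_S = \sum_{S\in\theta'} I_C(S)\,v'_S = f_{\theta',v'}(C)$. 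In particular, if $v'\in V_{\theta'}^R$ then each defining constraint $f_{\theta,v^\star}(X) > f_{\theta,v^\star}(Y)$ for $(X,Y)\in R$ coincides with the inequality $f_{\theta',v'}(X) > f_{\theta',v'}(Y)$, which holds by hypothesis; hence $v^\star\in V_\theta^R$. Thus $v'\mapsto v^\star$ maps $V_{\theta'}^R$ into $V_\theta^R$ while preserving all utility values.

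With this embedding in hand, part (i) follows by restricting a universal quantifier. Assuming $A\succ_\theta^R B$, I would take an arbitrary $v'\in V_{\theta'}^R$; its extension $v^\star$ lies in $V_\theta^R$, so $f_{\theta,v^\star}(A) > f_{\theta,v^\star}(B)$, and translating through $f_{\theta,v^\star}(\cdot) = f_{\theta',v'}(\cdot)$ yields $f_{\theta',v'}(A) > f_{\theta',v'}(B)$; since $v'$ was arbitrary, $A\succ_{\theta'}^R B$. (This is precisely the monotonicity claim already noted in the main text.) For part (ii) I would transport the two witnesses furnished by $A\inc_{\theta'}^R B$: functions $v_1',v_2'\in V_{\theta'}^R$ with $f_{\theta',v_1'}(A)\ge f_{\theta',v_1'}(B)$ and $f_{\theta',v_2'}(B)\ge f_{\theta',v_2'}(A)$. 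Their extensions $v_1^\star,v_2^\star\in V_\theta^R$ satisfy the same two inequalities with $f_{\theta,\cdot}$ in place of $f_{\theta',\cdot}$, which is exactly $A\inc_\theta^R B$.

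For part (iii) the cleanest route reuses part (i) together with asymmetry. From $A\succ_{\theta'}^R B$ and the asymmetry of $\succ_{\theta'}^R$ (item (i) of the first proposition of this appendix) we get $\neg(B\succ_{\theta'}^R A)$; were $B\succ_\theta^R A$ to hold, applying part (i) to the pair $(B,A)$ would force $B\succ_{\theta'}^R A$, a contradiction. Alternatively, (iii) can be shown directly: picking any $v'\in V_{\theta'}^R$ (nonempty since $\theta'\in\Theta^R$), its extension $v^\star\in V_\theta^R$ satisfies $f_{\theta,v^\star}(A) = f_{\theta',v'}(A) > f_{\theta',v'}(B) = f_{\theta,v^\star}(B)$, so $v^\star$ witnesses the failure of $B\succ_\theta^R A$.

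I do not anticipate a serious obstacle, as the whole argument is driven by the zero-extension construction. The one point requiring genuine verification is that the extended vector remains \emph{feasible} for $R$, i.e. stays inside $V_\theta^R$ rather than merely being an arbitrary point of $\mathbb{R}^{\theta}$; this is immediate from the utility-preservation identity, but it is the conceptual crux on which all three parts rest, so I would state it explicitly as a preliminary step before deriving the implications.
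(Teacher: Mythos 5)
Your proof is correct and follows essentially the same route as the paper's: the zero-extension of a value function on $\theta'$ to one on $\theta$ is exactly the paper's key observation for (i), part (ii) transports the two witnesses in the same way, and your derivation of (iii) from (i) plus asymmetry is the paper's contrapositive argument. You merely make the feasibility of the extended vector explicit, which the paper leaves implicit.
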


\begin{proof}
$(i)$ is true because if $f_{\theta, v}(A) > f_{\theta, v}(B)$ for all $v \in V_{\theta}^R$, then we should also have $f_{\theta', v}(A) > f_{\theta', v}(B)$ for all $v \in V_{R}^{\theta'}$. Indeed, each element of $V_{\theta'}^{R}$ can be seen as a value function in $V_{\theta}^R$ in which the parameters $v_S$ are set to 0 for $S\in \theta\setminus \theta'$.

$(ii)$ follows by a similar argument as for $(i)$. 

$(iii)$ The contrapositive is proved as follows: $B\!\succ_\theta^{R}\!A\!\Rightarrow\!B\succ_{\theta'}^R\!A$ by $(i)$, and $B\succ_{\theta'}^R\!A\!\Rightarrow\!\neg(A\succ_{\theta'}^R\!B)$ because strict preferences are asymmetrical.
\end{proof}

\end{document}